\newcommand{\ignore}[1]{}
\newcommand{\EE}[1]{{{\mbox{\bf E}}\left[{#1}\right]}}
\newcommand{\Ep}[2]{{{\mbox{\bf E}}_{#1}\left[{#2}\right]}}
\newcommand{\abs}[1]{\left\lvert{#1}\right\rvert}
\newcommand{\norm}[1]{\left\lvert{#1}\right\rvert}
\newcommand{\infnorm}[1]{\norm{#1}_{\infty}}
\newcommand{\Norm}[1]{\left\lVert{#1}\right\rVert}
\newcommand{\trnorm}[1]{\Norm{#1}_{\Sigma}}
\newcommand{\maxnorm}[1]{\Norm{#1}_{\textrm{max}}}
\newcommand{\loss}{\textrm{loss}}
\newtheorem{theorem}{Theorem}
\newtheorem{lemma}{Lemma}
\newtheorem{definition}{Definition}
\newtheorem{remark}{Remark}
\def\N{\mathbf{N}}
\def\SS{\mathbb{S}}
\def\R{\mathbb{R}}
\def\diag{\mathrm{diag}}
\def\rank{\mathrm{rank}}
\def\X{\mathcal{X}}
\def\RR{\mathcal{R}}
\def\e{\epsilon}
\def\D{\mathcal{D}}
\def\H{\mathcal{H}}
\def\o{\omega}
\newcommand{\empRad}{\hat{\RR}}
\title{Concentration-Based Guarantees for Low-Rank Matrix Reconstruction}
\author{Rina Foygel\\
Department of Statistics\\University of Chicago \\
\texttt{\small rina@uchicago.edu}
\And 
Nathan Srebro\\
Toyota Technological Institute at Chicago\\
\texttt{\small nati@ttic.edu}
}
\begin{document}

\maketitle

\begin{abstract}
  We consider the problem of approximately reconstructing a partially-observed, 
  approximately low-rank matrix.  This problem has received much
  attention lately, mostly using the trace-norm as a surrogate to the
  rank.  Here we study low-rank matrix reconstruction using both the
  trace-norm, as well as the less-studied max-norm, and present
  reconstruction guarantees based on existing analysis on the
  Rademacher complexity of the unit balls of these norms. We show how
  these are superior in several ways to recently published guarantees
  based on specialized analysis.
\end{abstract}

\section{Introduction}

We consider the problem of (approximately) reconstructing an
(approximately) low-rank matrix based on observing a random subset of
entries.  That is, we observe $s$ randomly chosen entries of an
unknown matrix $Y\in\R^{n\times m}$, where we assume either $Y$ is of
rank at most $r$, or there exists $X\in\R^{n\times m}$ of rank at most
$r$ that is close to $Y$.  Based on these $s$ observations, we would
like to construct a matrix $\hat{X}$ that is as close as possible to
$Y$.

There has been much interest recently in computationally efficient
methods for reconstructing a partially-observed, possibly noisy,
low-rank matrix, and on accompanying guarantees on the quality of the
reconstruction and the required number of observations.  Since
directly searching for a low-rank matrix minimizing the empirical
reconstruction error is NP-hard \citep{Chistov}, most work has focused on
using the trace-norm (a.k.a. nuclear norm, or Schatten-1-norm) as a
surrogate for the rank.  The trace-norm of a matrix is the sum (i.e.
$\ell_1$-norm) of its singular values, and thus relaxing the rank
(i.e. the number of non-zero singular values) to the trace-norm is
akin to relaxing the sparsity of a vector to its $\ell_1$-norm, as is
frequently done in compressed sensing.  The analysis of the quality of
reconstruction has also been largely driven by ideas coming from
compressed sensing, typically studying the optimality conditions of
the empirical optimization problem, and often requiring various
``incoherence''-type assumptions on the underlying low-rank matrix.

In this paper we provide simple guarantees on approximate low-rank
matrix reconstruction using a different surrogate regularizer: the
$\gamma_{2:\ell_1\rightarrow\ell_{\infty}}$ norm, which we refer to
simply as the ``max-norm''.  This regularizer was first suggested by
\citet{MMMF}, though it has not received much attention since.  Here
we show how this regularizer can yield guarantees that are superior in
some ways to recent state-of-the-art.  In particular, we show that
when the entries are uniformly bounded, i.e.
$\infnorm{X}=\mathbf{O}(1)$ (this corresponds to the ``no spikiness''
assumption of \citet{NW}, and is also assumed by \citet{K} and in the
approximate reconstruction guarantee of \citet{KMO}), then the
max-norm regularized predictor requires a sample size of
\begin{equation}
  \label{eq:introsampsize}
  s = \mathbf{O}\left(\frac{r(n+m)}{\e} \cdot\frac{\sigma^2+\epsilon}{\epsilon}\cdot  \log^3(1/\epsilon) \right)
\end{equation}
to achieve mean-squared reconstruction error
$\frac{1}{nm}\lvert{\hat{X}-Y}\rvert^2_2=\sigma^2+\epsilon$, where
$\sigma^2$ is the the mean-squared-error of
the best rank-$r$ approximation of $Y$---that is, $\sigma^2=\frac{1}{nm}|X-Y|^2_2$, where $X$ is 
the rank-$r$ approximation.  When $Y$ is exactly low-rank
(the noiseless case), $\sigma^2=0$ and the sample complexity is
$\mathbf{O}\left(\frac{r(n+m)}{\e}\cdot\log^3(1/\epsilon)\right)$.  Compared to the three recent
similar bounds mentioned above, this guarantee avoids the extra
logarithmic dependence on the dimensionality, as well as the
assumption of independent noise, but has a slightly worse dependence
on $\epsilon$.  We emphasize that we do not make any assumptions about
the noise, nor about incoherence properties of the underlying low-rank
matrix $X$.

We also provide a guarantee on the mean-absolute-error of the
reconstruction, and discuss guarantees for reconstruction using the
trace-norm as a surrogate.  Using the trace-norm allows us to provide
mean-absolute-error guarantees also for matrices where the magnitudes
are {\em not} uniformly bounded (i.e.~``spiky'' matrices).  We further
show that a spikiness assumption is necessary for squared-error
approximate reconstruction of low-rank matrices, regardless of the
estimator used.

Instead of focusing on optimality conditions as in previous work, our
guarantees follow from generic generalization guarantees based on the
Rademacher complexity, and an analysis of the Rademacher complexity of
the max-norm and trace-norm balls conducted by
\citet{ShraibmanSrebro}.  To obtain the desired low rank
reconstruction guarantees, we combine these with bounds on the
max-norm and trace-norm in terms of the rank.  The point we make here
is that these fairly simple arguments, mostly based on the work of
\citet{ShraibmanSrebro}, are enough to obtain guarantees that are in
many ways better and more general than those presented in recent
years.

\paragraph{Notation.}
We use $\norm{M}$ to denote the elementwise norms of a matrix $M$:
$\norm{M}_1=\sum_{ij}|M_{ij}|$, $\norm{M}_2$ is the Frobenius norm,
and $|M|_{\infty}=\max_{ij}|M_{ij}|$.  We discuss $n \times m$
matrices, and without loss of generality always assume $n\geq m$.

\section{The Max-Norm and Trace-Norm}

We will consider the following two matrix norms, which are both
surrogates for the rank:

\begin{definition}\label{def_trace} The {\bf trace-norm} of a matrix
  $X\in\R^{n\times m}$ is given by:
$$\trnorm{X}=\sum (\text{singular values of $X$})=\min_{U,V:X=UV^T}|U|_2|V|_2\;.$$
\end{definition}

\begin{definition}\label{def_max} The {\bf max-norm} of a matrix
  $X\in\R^{n\times m}$ is given by:
$$\maxnorm{X}=\min_{U,V:X=UV^T}\left(\max_i |U_{(i)}|_2\right)\left(\max_j|V_{(j)}|_2\right)\;,$$
where $U_{(i)}$ and $V_{(j)}$ denote the $i^{\mathrm{th}}$ row of $U$ and the $j^{\mathrm{th}}$ row of $V$, respectively.
\end{definition}

Both the trace-norm and the max-norm are semi-definite representable
\citep{FazelHindiBoyd,MMMF}.  Consequently, optimization problems involving a
constraint on the trace-norm or max-norm, a linear or quadratic
objective, and possibly additional linear constraints, are solvable
using semi-definite programming.  We will consider estimators which
are solutions to such problems.  \ignore{See \citet{} and references therein
for practical approaches for efficiently solving such semi-definite
programs.}

\citet{ShraibmanSrebro} and later \citet{Sherstov} studied the
max-norm and trace-norm as surrogates for the rank in a classification
setting, where one is only concerned with the signs of the underlying
matrix.  They showed that a sign matrix might be realizable with low
rank, but realizing it with unit margin might require exponentially
high max-norm or trace-norm.  Based on this analysis, they argued that
the max-norm and trace-norm {\em cannot} be used to obtain
reconstruction guarantees for sign matrices of low rank matrices.

Here, we show that in a regression setting, the situation is quite
different, and the max-norm and trace-norm {\em are} good convex
surrogates for the rank.  The specific relationship between these
surrogates and the rank is determined by how we control the scale of
the matrix $X$ (i.e.~the magnitude of its entries).  This will be made
explicit in the next section, but for now we state the bounds on the
trace-norm and max-norm in terms of the rank which we will leverage in
Section \ref{sec:mainresults}.

By bounding the $\ell_1$ norm of the singular values (i.e. the
trace-norm) by their $\ell_2$ norm (i.e.~the Frobenius norm) and the
number of non-zero values (the rank), we obtain the following
relationship between the trace-norm and Frobenius norm:
\begin{equation}\label{eqn_trace_bound}
|X|_2\leq \|X\|_{\Sigma}\leq \sqrt{\rank(X)}\cdot|X|_2\;.
\end{equation}
Interpreting the Frobenius norm as specifying the {\em average} entry
magnitude, $\frac{1}{nm}\norm{X}^2_2$, we can view the above as upper
bounding the trace-norm with the square root of the rank, when the average entry
magnitude is fixed.

An analagous bound for the max norm, substituting $\ell_{\infty}$ norm
(maximal entry magnitude) for Frobenius norm (average entry
magnitude), can be obtained as follows:
\begin{lemma}\label{lem:max_vs_infinity}For any $X\in\R^{n\times m}$,
$|X|_{\infty}\leq \|X\|_{\max}\leq \sqrt{\rank(X)}\cdot|X|_{\infty}$.
\end{lemma}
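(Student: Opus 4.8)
The plan is to prove the two inequalities separately; the left-hand one is immediate from Cauchy--Schwarz, while the right-hand one carries all the content and will be the main obstacle.

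For the lower bound $|X|_\infty \le \|X\|_{\max}$, I would fix an arbitrary factorization $X = UV^T$, note that $X_{ij} = \langle U_{(i)}, V_{(j)}\rangle$, and apply Cauchy--Schwarz: $|X_{ij}| \le |U_{(i)}|_2\,|V_{(j)}|_2 \le (\max_i |U_{(i)}|_2)(\max_j |V_{(j)}|_2)$ for every $i,j$. Maximizing over $i,j$ and then taking the infimum over factorizations gives the claim.

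For the upper bound, let $r = \rank(X)$ (the case $r = 0$ being trivial) and start from a full-rank factorization $X = UV^T$ with $U \in \R^{n\times r}$ and $V \in \R^{m\times r}$ (obtained e.g.\ from the thin SVD); since $X$ has rank $r$, the columns of $U$ are independent, so its rows $\{U_{(i)}\}$ span $\R^r$. The key observation is that $(U,V)$ may be replaced by $(UR^T, VR^{-1})$ for any invertible $R \in \R^{r\times r}$ without changing $UV^T$, and I would use this freedom to balance the two factors. Form the symmetric, full-dimensional polytopes $K_U = \operatorname{conv}\{\pm U_{(i)}\} \subset \R^r$ and $K_V = \operatorname{conv}\{\pm V_{(j)}\} \subset \R^r$; the inequalities $|\langle U_{(i)}, V_{(j)}\rangle| = |X_{ij}| \le |X|_\infty$ say precisely that $K_V \subseteq |X|_\infty\, K_U^{\circ}$, where $\circ$ denotes the polar body. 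Now apply John's theorem to choose an invertible $R$ with $B_2^r \subseteq R\,K_U \subseteq \sqrt{r}\,B_2^r$, where $B_2^r$ is the Euclidean unit ball. The right inclusion gives $\max_i |R\,U_{(i)}|_2 \le \sqrt r$; the left inclusion passes to polars as $R^{-T}K_U^{\circ} = (R\,K_U)^{\circ} \subseteq B_2^r$, hence $R^{-T}K_V \subseteq |X|_\infty\,B_2^r$, i.e.\ $\max_j |R^{-T}V_{(j)}|_2 \le |X|_\infty$. Feeding the factorization $(UR^T, VR^{-1})$ into Definition~\ref{def_max} then yields $\|X\|_{\max} \le \sqrt r \cdot |X|_\infty$.

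The main obstacle is precisely getting the dimension-free factor $\sqrt r$ rather than a factor involving $n$ or $m$. Cheaper arguments --- bounding the max-norm by the trace-norm and invoking \eqref{eqn_trace_bound}, or ``whitening'' only one of the two factors so that it has orthonormal columns --- each control one side but leave a spurious $\sqrt{nm}$ or $\sqrt n$ on the other. John's theorem is what makes the two-sided balancing work: it simultaneously squeezes one factor into $\sqrt r\,B_2^r$ and its polar (hence the other factor) into $B_2^r$, and everything else is routine bookkeeping.
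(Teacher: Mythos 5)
Your proof is correct, but for the substantive (upper) inequality it takes a genuinely different route from the paper. The paper quotes the dual characterization of the max-norm from \citet{LeeShraibmanSpalek}, $\maxnorm{X}=\sup_{p,q}\trnorm{\diag(p)X\diag(q)}$ over nonnegative unit vectors $p,q$, and then simply invokes \eqref{eqn_trace_bound} together with $\rank(\diag(p)X\diag(q))\leq\rank(X)$ and $|\diag(p)X\diag(q)|_2\leq\infnorm{X}$; this is a two-line reduction, but its weight rests on the cited duality formula. You instead work directly from Definition~\ref{def_max}: starting from a rank-$r$ factorization $X=UV^T$, you exploit the gauge freedom $(U,V)\mapsto(UR^T,VR^{-1})$ and use John's theorem on the symmetric polytope $K_U=\mathrm{conv}\{\pm U_{(i)}\}$ (correctly noted to be full-dimensional in $\R^r$) to balance the factors, with the polarity step $(RK_U)^{\circ}=R^{-T}K_U^{\circ}$ and the observation $K_V\subseteq\infnorm{X}K_U^{\circ}$ doing the bookkeeping; the rows of $UR^T$ then have norm at most $\sqrt r$ and those of $VR^{-1}$ norm at most $\infnorm{X}$, giving the bound. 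The details check out (including the trivial $r=0$ case and the identification of the rows of $UR^T$ and $VR^{-1}$ with $RU_{(i)}$ and $R^{-T}V_{(j)}$). What each approach buys: the paper's argument is shorter and reuses machinery already needed elsewhere, while yours is self-contained modulo John's theorem, avoids the $\gamma_2$-duality formula entirely, and has the added benefit of exhibiting an explicit balanced factorization witnessing $\maxnorm{X}\leq\sqrt{\rank(X)}\,\infnorm{X}$ rather than deducing it from a variational identity.
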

\begin{proof}
Consider the minimizing factorization $X=UV^{T}$ and let $X_{ij}$ be
the largest magnitude entry in $X$, then:
$\maxnorm{X}\geq\norm{U_{(i)}}\cdot\norm{V_{(j)}}\geq\abs{X_{ij}}=\infnorm{X}$.  

To obtain the upper bound we first write the max-norm as
\citep{LeeShraibmanSpalek}:
\begin{align}
  \|X\|_{\max}
  &=\sup_{p,q}\|\diag(p)X\diag(q)^2\|_{\Sigma}\;, \\
\intertext{where the supremum is over nonnegative unit vectors $p,q$.
  We can now continue using \eqref{eqn_trace_bound}:}
&\leq \sup_{p,q}
\sqrt{\rank(\diag(p)X\diag(q))}\cdot |\diag(p)X\diag(q)|_2 \notag \\
&\leq \sup_{p,q} \sqrt{\rank X} \cdot
\sqrt{\sum_{ij}p_i^2q_j^2X_{ij}^2} =\sqrt{\rank X}\infnorm{X}\;. \notag \qedhere
\end{align}
\end{proof}

\section{Reconstruction Guarantees}\label{sec:mainresults}

The theorems below provide reconstructions guarantees, first under the
a mean-absolute-error reconstruction measure (Theorem \ref{L1bound})
and then under a mean-squared-error reconstruction measure (Theorem
\ref{L2bound}).  Since the guarantees are for {\em approximate}
reconstruction, we must impose some notion of scale.  In other words,
we can think of measuring the error relative to the scale of the
data---if $Y$ is multiplied by some constant, then obviously the
reconstruction error would also be multiplied by this constant.  In
the theorems below we refer to two notions of scale: the {\em average}
squared magnitude of matrix entries, i.e.~$\frac{1}{nm}\norm{X}^2_2$,
and the {\em maximal} magnitude of matrix entries, i.e.~$\infnorm{X}$.
For simplicity and without loss of generality, the results are stated
for unit scale.

An issue to take note of is whether the $s$ observed entries of $Y$
are chosen with or without replacement, i.e.~whether we choose a set
$S$ of entries uniformly at random over all sets of exactly $s$
entries (no replacements), or whether we make $s$ independent uniform
choices of entries, possibly observing the same entry twice.  Our
results apply in both cases.

\begin{theorem}\label{L1bound}
For any $M,Y\in\R^{n\times m}$ where $M$ is of rank at most $r$:
\begin{itemize}
\item[a.] {\bf Entry magnitudes bounded on-average.} Consider the
  estimator\footnote{If $S$ is chosen with replacements, it is a
    multiset, and the summation $\sum_{(i,j)\in S}$ should be
    interpreted as summation with repetitions.}
$$\hat{X}(S)=\arg\min_{\trnorm{X}\leq \sqrt{rnm}}\sum_{(i,j)\in S}|Y_{ij}-X_{ij}|\;.$$
If $\frac{1}{nm}|M|^2_2\leq 1$ and 
$s\geq \mathbf{O}\left(\frac{ r(n+m)\log(n)}{\e^{2}}\right)$,
then in expectation over a sample $S$ chosen either uniformly over
sets of size $s$ (without replacements) or by choosing $s$ entries
uniformly and independently (with replacements):
$$\frac{1}{nm}|Y-\hat{X}(S)|_1\leq \frac{1}{nm}|Y-M|_1+\epsilon\;.$$
\item[b.] {\bf Entry magnitudes bounded uniformly.} Consider the estimator
$$\hat{X}(S)=\arg\min_{\|X\|_{\max}\leq \sqrt{r}}\sum_{(i,j)\in
  S}|Y_{ij}-X_{ij}|\;.$$ If $\infnorm{M}\leq 1$ and
$s\geq \mathbf{O}\left(\frac{ r(n+m)}{\e^{2}}\right)$, then in expectation over
a sample $S$ of size $s$ chosen either with or without replacements as
above:
$$\frac{1}{nm}|Y-\hat{X}(S)|_1\leq \frac{1}{nm}|Y-M|_1+\e\;.$$
\end{itemize}
\end{theorem}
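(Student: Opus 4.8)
The plan is to prove both parts via a standard Rademacher-complexity generalization argument. Recall the general recipe: if $\hat{X}$ minimizes the empirical average of some loss over a hypothesis class $\H$, and $M \in \H$ is any comparator, then a symmetrization bound gives
$$\Ep{S}{\tfrac{1}{nm}|Y-\hat{X}(S)|_1} \leq \tfrac{1}{nm}|Y-M|_1 + 2\,\empRad_s(\loss\circ\H),$$
where $\empRad_s$ is the (expected) Rademacher complexity of the loss class on samples of size $s$. The loss here is the absolute-error loss $X \mapsto |Y_{ij}-X_{ij}|$, which is $1$-Lipschitz in $X_{ij}$; by the contraction/Lipschitz-composition lemma, the Rademacher complexity of $\loss\circ\H$ is controlled (up to a constant, and with the $Y$-dependent constant term vanishing in the contraction step) by the Rademacher complexity of the class $\H$ itself, namely the trace-norm ball $\{X : \trnorm{X}\leq\sqrt{rnm}\}$ in part (a), or the max-norm ball $\{X : \maxnorm{X}\leq\sqrt{r}\}$ in part (b). The key inputs are then the Rademacher complexity bounds of \citet{ShraibmanSrebro}: the max-norm unit ball has Rademacher complexity $\mathbf{O}\big(\sqrt{(n+m)/s}\big)$, and the trace-norm ball of radius $\sqrt{rnm}$ (normalized by $1/(nm)$ to match the averaged loss) contributes $\mathbf{O}\big(\sqrt{r(n+m)\log(n)/s}\big)$.

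The concrete steps, in order: (i) Verify that $M$ lies in the relevant constraint set. For part (a), since $\tfrac{1}{nm}|M|_2^2\leq 1$ we have $|M|_2\leq\sqrt{nm}$, so by \eqref{eqn_trace_bound}, $\trnorm{M}\leq\sqrt{\rank(M)}\,|M|_2\leq\sqrt{r}\cdot\sqrt{nm}=\sqrt{rnm}$; thus $M$ is feasible. For part (b), since $\infnorm{M}\leq 1$, Lemma \ref{lem:max_vs_infinity} gives $\maxnorm{M}\leq\sqrt{\rank(M)}\,\infnorm{M}\leq\sqrt{r}$, so again $M$ is feasible. (ii) State the generalization bound above, noting it holds in expectation over $S$ whether sampled with or without replacement — the without-replacement case follows because sampling without replacement only reduces the variance/Rademacher term relative to the i.i.d.\ case (a standard reduction, e.g.\ via the fact that the expected empirical process is no larger), and the with-replacement case is the textbook i.i.d.\ setting. (iii) Plug in the Rademacher complexity estimates and apply the Lipschitz contraction lemma to pass from $\loss\circ\H$ to $\H$. (iv) Set the resulting complexity term $\leq\epsilon$ and solve for $s$: for part (b) this gives $\sqrt{(n+m)r/s}\lesssim\epsilon$, i.e.\ $s\gtrsim r(n+m)/\epsilon^2$; for part (a) the extra $\log n$ from the trace-norm bound yields $s\gtrsim r(n+m)\log(n)/\epsilon^2$.

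The main obstacle — really the only non-bookkeeping point — is getting the normalization and scaling exactly right across the three objects involved (the averaging factor $\tfrac{1}{nm}$, the sample size $s$, and the matrix norm radius). Specifically, one must be careful that the Rademacher complexity of the \emph{scaled} trace-norm ball $\{X:\trnorm{X}\leq\sqrt{rnm}\}$, measured against the \emph{averaged} loss $\tfrac{1}{nm}|Y-X|_1$, produces the stated $\sqrt{r(n+m)\log(n)/s}$ rate; this is a matter of tracking how the radius $\sqrt{rnm}$ interacts with the $\tfrac{1}{nm}$ factor and the $s$ sampled entries, but it is exactly the combination for which \citet{ShraibmanSrebro} provide the estimate. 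The same care is needed for the max-norm ball in part (b), where the radius $\sqrt{r}$ and the fact that max-norm Rademacher complexity is dimension-free (no $\log n$) are what deliver the cleaner $r(n+m)/\epsilon^2$ bound. A secondary (minor) point is confirming the contraction lemma applies cleanly to the absolute-value loss: since $t\mapsto|Y_{ij}-t|$ is $1$-Lipschitz and we only need the bound in expectation, the standard Ledoux–Talagrand contraction suffices, and the $Y$-dependent offset drops out because it does not affect the Lipschitz constant.
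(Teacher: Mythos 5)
Your proposal is correct and follows essentially the same route as the paper: feasibility of $M$ via \eqref{eqn_trace_bound} and Lemma~\ref{lem:max_vs_infinity}, the Bartlett--Mendelson expected-generalization bound with Lipschitz contraction (the paper's Lemma~\ref{lem:BM_radbound}), and the Shraibman--Srebro Rademacher bounds \eqref{eq:maxRad} and \eqref{eq:trRad} with radii $\sqrt{r}$ and $\sqrt{rnm}$, yielding exactly the stated sample sizes. The only step you assert rather than prove---that $E_S\bigl[\sup_X L(X)-\hat{L}_S(X)\bigr]$ under sampling without replacement is no larger than under i.i.d.\ sampling---is precisely the paper's Lemma~\ref{lem:w_or_wo:expectation} (established there by a combinatorial multiplicity-vector argument in the appendix), so your appeal to this standard domination identifies the right fact and is the only detail left implicit.
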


\begin{remark}\label{rem:L1_whp}
The above results can also be shown to hold in high probability over the sample $S$, rather than in expectation. Specifically, to ensure that the results of Theorem~\ref{L1bound} hold with probability at least $1-n^{-\beta}$ (for sampling with replacement) or $1-n^{-(\beta-2)}$ (for sampling without replacement), it is sufficient to change the sample size requirement to $s\geq\mathbf{O}\left(\frac{r(n+m)\log (n)+\beta\log(n)}{\e^2}\right)$ (in the trace-norm case) or $s\geq\mathbf{O}\left(\frac{r(n+m)+\beta\log(n)}{\e^2}\right)$ (in the max-norm case).
\end{remark}

\begin{theorem}\label{L2bound}
  For any $Y=M+Z\in\R^{n\times m}$ where $|Z|_{\infty}\leq
  \sqrt{\frac{rn}{ \log n}}$ and $M$ is of rank at most $r$ with
  $\infnorm{M}\leq 1$, denote $\sigma^2 = \frac{1}{nm}\norm{Z}^2_2$.  Consider the estimator
\begin{equation}\label{eq:XhatL2}
\hat{X}(S)=\arg\min_{\|X\|_{\max}\leq \sqrt{r}}\sum_{(i,j)\in S}(Y_{ij}-X_{ij})^2\;.
\end{equation}
If $s\geq
\mathbf{O}\left(\frac{r(n+m)}{\e}\cdot\frac{\sigma^2+\e}{\e}\cdot(\log^3(r/\e)+\beta)\right)$,
then, with probability at least $1-n^{-\beta}$ over a sample $S$ of
size $s$ chosen with replacement, or with probability at least $1-n^{-(\beta-2)}$ over a sample $S$ of size $s$ chosen without replacement,
\begin{equation}
  \label{eq:l2errbound}
\frac{1}{nm}|Y-\hat{X}(S)|^2_2\leq \sigma^2 + \e\;.  
\end{equation}
If we instead use the estimator:
\begin{equation}\label{eq:sqmaxestinf}
\hat{X}(S)=\arg\min_{\substack{\|X\|_{\max}\leq \sqrt{r}\\
    |X|_{\infty}\leq 1}}\sum_{(i,j)\in S}(Y_{ij}-X_{ij})^2\;,
\end{equation}
then we obtain \eqref{eq:l2errbound} when $s\geq\mathbf{O}\left(\frac{r(n+m)}{\e}\cdot\frac{\sigma^2+\e}{\e}\cdot(\log^3(1/\e)+\beta)\right)$.
\end{theorem}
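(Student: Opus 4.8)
The plan is to read off \eqref{eq:l2errbound} from a generalization bound for the squared loss, exploiting the \emph{smoothness} of the squared loss to get an ``optimistic'' rate --- precisely the mechanism that turns a generic $1/\e^2$ sample dependence into the factor $\frac{\sigma^2+\e}{\e^2}$. Write $\hat L(X)=\frac1s\sum_{(i,j)\in S}(Y_{ij}-X_{ij})^2$ for the normalized empirical loss and $L(X)=\frac1{nm}|Y-X|^2_2$ for the population loss, so that $L(M)=\sigma^2$ and the claim \eqref{eq:l2errbound} is exactly $L(\hat X(S))\le\sigma^2+\e$.

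First I would extract the two facts that Lemma~\ref{lem:max_vs_infinity} supplies here. Since $\rank(M)\le r$ and $\infnorm M\le 1$ we get $\maxnorm M\le\sqrt r$, so $M$ is feasible for \eqref{eq:XhatL2} (and, as $\infnorm M\le1$, also for \eqref{eq:sqmaxestinf}); by optimality of $\hat X(S)$ this gives $\hat L(\hat X(S))\le\hat L(M)$. In the other direction, every feasible $X$ has $\infnorm X\le\maxnorm X\le\sqrt r$, so every residual obeys $|Y_{ij}-X_{ij}|\le\sqrt r+\infnorm M+\infnorm Z=\mathbf{O}(\sqrt{rn/\log n})$ --- this is where the hypothesis $\infnorm Z\le\sqrt{rn/\log n}$ enters --- and hence the per-entry squared loss of any feasible $X$ is at most some $b=\mathbf{O}(rn/\log n)$. (Keeping this range bound from growing is exactly the role of the bound on $\infnorm Z$.)

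Next I would combine two ingredients in an optimistic-rate generalization bound for smooth non-negative losses. (i) The map $x\mapsto(y-x)^2$ is $2$-smooth and non-negative, and on the feasible set it is bounded by $b$. (ii) By the Rademacher-complexity estimate of \citet{ShraibmanSrebro}, the (expected, over the draw of $S$) Rademacher complexity of $\{X:\maxnorm X\le1\}$ at sample size $s$ is $\mathbf{O}(\sqrt{(n+m)/s})$, so for the ball of radius $\sqrt r$ it is $\RR=\mathbf{O}(\sqrt{r(n+m)/s})$. The optimistic-rate bound --- the local-Rademacher / self-bounding argument for smooth losses, whose peeling over scales is what produces the polylogarithmic factor --- then yields, with probability at least $1-\delta$, simultaneously for all feasible $X$,
\[
L(X)\ \le\ \hat L(X)\ +\ c\Big(\sqrt{\hat L(X)}\,\RR\,\log^{3/2}(r/\e)\ +\ \RR^2\log^3(r/\e)\ +\ \tfrac{b\log(1/\delta)}{s}\Big)\,.
\]
I would apply this at $X=\hat X(S)$, use $\hat L(\hat X(S))\le\hat L(M)$, and bound the single fixed quantity $\hat L(M)$ by Bernstein's inequality (its mean is $\sigma^2$ and it is bounded by $b$), getting $\hat L(M)\le\sigma^2+\mathbf{O}(\e)$ as soon as $s=\mathbf{O}(\frac{(\sigma^2+\e)b\log(1/\delta)}{\e^2})$, which the stated sample size implies. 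Substituting $\delta=n^{-\beta}$ (so $\log(1/\delta)=\beta\log n$), $\RR^2=\mathbf{O}(r(n+m)/s)$ and $b\log n=\mathbf{O}(rn)$, the right-hand side collapses to $\sigma^2+\mathbf{O}(\e)$ precisely when $s=\mathbf{O}\big(\frac{r(n+m)}{\e}\cdot\frac{\sigma^2+\e}{\e}\cdot(\log^3(r/\e)+\beta)\big)$: the term $\sqrt{\hat L(\hat X(S))}\,\RR$ produces the $\frac{\sigma^2+\e}{\e}$ factor, the term $\RR^2$ produces the $\frac{r(n+m)}{\e}$ part, and the term $b\log(1/\delta)/s$ is subsumed because $b\log n=\mathbf{O}(rn)\le\mathbf{O}(r(n+m)\cdot\frac{\sigma^2+\e}{\e})$. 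For the estimator \eqref{eq:sqmaxestinf} the extra constraint $\infnorm X\le1$ shrinks the range of the predictions from $\sqrt r$ down to $1$, which in the peeling argument costs only $\mathbf{O}(\log(1/\e))$ scales instead of $\mathbf{O}(\log(r/\e))$, hence $\log^3(1/\e)$ in place of $\log^3(r/\e)$. Finally, the without-replacement version follows from the with-replacement one by the standard reduction from sampling without replacement, which is what changes the failure probability from $n^{-\beta}$ to $n^{-(\beta-2)}$.

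The step I expect to be the real obstacle is the clean application of the optimistic-rate bound: verifying that the squared loss restricted to the (max-norm, and for \eqref{eq:sqmaxestinf} also $\infnorm$-) constrained class genuinely meets the smoothness and boundedness hypotheses, and then carrying the constants and the logarithmic factors through carefully enough that (a) the leading dependence is $\frac{\sigma^2+\e}{\e^2}$ rather than an unconditional $1/\e^2$, and (b) the heavy tail of $Z$ is absorbed into the $\infnorm Z\le\sqrt{rn/\log n}$ hypothesis rather than silently inflating $b$.
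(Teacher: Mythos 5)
Your proposal is correct and follows essentially the same route as the paper: feasibility of $M$ via Lemma~\ref{lem:max_vs_infinity}, the Shraibman--Srebro worst-case Rademacher bound \eqref{eq:maxRad} for the max-norm ball, and the smooth-loss optimistic-rate guarantee of \citet{SST} --- whose internal form $\sup_X L(X)-\hat{L}_S(X)-\sqrt{A_1\hat{L}_S(X)}\le A_2$ is exactly the uniform inequality you wrote, so your explicit ERM-plus-Bernstein decomposition is just the unpacked version of citing their Theorem~1 with $L^*\le L(M)=\sigma^2$, and your $B=\sqrt{r}$ versus $B=1$ distinction reproduces the $\log^3(r/\e)$ versus $\log^3(1/\e)$ factors. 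The one step you undersell is the without-replacement case: there is no off-the-shelf ``standard reduction'' here, and the paper proves a dedicated combinatorial lemma (Lemma~\ref{lem:w_or_wo:prob}) showing that for events of the form $\sup_{X}\bigl(g(L(X))-\hat{L}_S(X)\bigr)\ge c$ the failure probability inflates by at most $4s\le 4n^2$, which is precisely what turns $n^{-\beta}$ into $n^{-(\beta-2)}$.
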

The estimator \eqref{eq:sqmaxestinf} is SDP-representable, though
potentially more cumbersome.
\begin{remark}\label{rem:noisebound}
  The requirement on the maximal magnitude of the error in Theorem
  \ref{L2bound}, $|Z|_{\infty}\leq \sqrt{\frac{rn}{ \log n}}$, is very
  generous, and easily holds with high probability for sub-exponential
  noise.  A stricter requirement, e.g. $\mathbf{O}(\sqrt{r \log n})$, which still holds with high probability for
  subgaussian  noise, yields a guarantee
  with exponentially high probability $1-e^{-n/\log n}$, without a sample-complexity dependence on $\beta$.
\end{remark}
\begin{remark}\label{rem:L2result} 
  A guarantee similar to Theorem \ref{L2bound} can be obtained if we
  can ensure $\maxnorm{M}\leq A$, for some $A$, without requiring
  $\infnorm{M}\leq 1$.  For $\hat{X}(S)=\arg\min_{\maxnorm{X}\leq
    A}\sum_{ij\in S}(Y_{ij}-X_{ij})^2$, we have \eqref{eq:l2errbound}
  with a sample of size
$$s\geq\mathbf{O}\left(\frac{A^2(n+m)}{\e}\cdot\frac{\sigma^2+\e}{\e}\cdot(\log^3(A^2/\e)+\beta)\right)\;.$$
In Section \ref{sec:ExactRec:MaxMu}, we will see how certain incoherence assumptions
used in previous bounds yield a bound on $\maxnorm{M}$, and compare
the max-norm based reconstruction guarantee to the previously
published results.
\end{remark}

  In Theorems \ref{L1bound} and \ref{L2bound} we do not assume the
  noise, i.e. the entries of $Z=Y-M$, are independent or
  zero-mean---in fact, we make no assumption on $Z$, other than the
 very generous upper bound $|Z|_{\infty}\leq \sqrt{\frac{rn}{ \log
      n}}$ discussed above.  When entries of $Z$ can be arbitrary, it
  is not possible to ensure reconstruction of $M$ (e.g. we can set
  things up so $Y$ actually has lower rank then $M$, and so it is
  impossible to identify $M$).  Consequently, in Theorems
  \ref{L1bound} and \ref{L2bound} we instead bound the excess error in
  predicting $Y$ itself.  If entries of $Z$ {\em are} independent and
  zero-mean, then we may give the following guarantee about reconstructing the underlying matrix $M$:
  
\begin{theorem}\label{L2bound_inderror}
  For $(i,j)\in[n]\times[m]$, let $\mathcal{F}_{(i,j)}$ be any
  mean-zero distribution. Suppose that the observed entries of $Y$ are
  given by $Y_{(i_t,j_t)}=M_{(i_t,j_t)}+Z_{t}$ for $t=1,2,\dots,s$,
  where $(i_t,j_t)\stackrel{iid}{\sim}Unif([n]\times[m])$ and
  $Z_t|(i_t,j_t) \sim \mathcal{F}_{(i_t,j_t)}$ independently for each
  $t$.  That is, the noise is independent and zero-mean (though its distribution is
  allowed to depend on the location of the observation), the sample is
  drawn with replacement, and if an entry of the matrix is observed
  more than once, then the noise on the entry is drawn independently
  each time.

Assume $|M|_{\infty}\leq 1$, $\rank(M)\leq r$, and $\sup_{t\in[s]}|Z_{t}|\leq \mathbf{o}\left(\sqrt{\frac{rn}{\log n}}\right)$ with high probability. Denote
$$\sigma^2=\frac{1}{nm}\sum_{i,j}E_{Z_{ij}\sim \mathcal{F}_{ij}}(Z_{ij}^2)\;.$$
For the estimator given in Equation \eqref{eq:XhatL2}, with high
probability over the sample $S$ of size
$s\geq\mathbf{O}\left(\frac{r(n+m)}{\e}\cdot\frac{\sigma^2+\e}{\e}\cdot \log^3(r/\e)\right)$,
\begin{equation}
  \label{eq:l2errbound_inderror}
\frac{1}{nm}|M-\hat{X}(S)|^2_2\leq\e\;.  
\end{equation}
Alternatively, is $S$ is sampled uniformly without replacements, with
the same assumptions and sample size, and as long as $s\leq \tfrac{K+1}{e}
(nm)^{1-\tfrac{1}{K+1}}$, we have $\frac{1}{nm}|M-\hat{X}(S)|^2_2\leq
4K\e$.
\end{theorem}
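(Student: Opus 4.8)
The plan is to reduce this to a single uniform-convergence argument over the max-norm ball, essentially the one already behind Theorem~\ref{L2bound}; the only genuinely new point is a choice of population risk for which the noise cross-term vanishes, so that controlling the excess prediction error directly controls the reconstruction error $\frac1{nm}|M-\hat X|_2^2$. View the data as an i.i.d.\ sample $(i_t,j_t,Z_t)$, $t=1,\dots,s$, and for a matrix $X$ with $\maxnorm X\leq\sqrt r$ set $\ell_X(i,j,z)=(M_{ij}+z-X_{ij})^2$, so that the objective in \eqref{eq:XhatL2} is $s$ times the empirical risk $\hat{\mathcal L}(X)=\frac1s\sum_t\ell_X(i_t,j_t,Z_t)$. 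Since $(i,j)$ is uniform and $z$ is conditionally mean-zero, the population risk is
\[
\mathcal L(X)=\frac1{nm}\sum_{ij}\Ep{z\sim\mathcal F_{ij}}{(M_{ij}+z-X_{ij})^2}=\frac1{nm}|M-X|_2^2+\sigma^2 ,
\]
so $\mathcal L(M)=\sigma^2$ and $\mathcal L(X)-\sigma^2=\frac1{nm}|M-X|_2^2$. By Lemma~\ref{lem:max_vs_infinity}, $\maxnorm M\leq\sqrt{\rank M}\cdot\infnorm M\leq\sqrt r$, so $M$ is feasible for \eqref{eq:XhatL2}; hence $\hat{\mathcal L}(\hat X)\leq\hat{\mathcal L}(M)$ and
\[
\tfrac1{nm}|M-\hat X|_2^2=\mathcal L(\hat X)-\mathcal L(M)\leq\big(\mathcal L(\hat X)-\hat{\mathcal L}(\hat X)\big)+\big(\hat{\mathcal L}(M)-\mathcal L(M)\big).
\]

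Next I would bound both differences (the first uniformly over $\{X:\maxnorm X\leq\sqrt r\}$) using exactly the ingredients behind Theorem~\ref{L2bound}: the bound of \citet{ShraibmanSrebro} that the empirical Rademacher complexity of that ball is $\mathbf{O}(\sqrt{r(n+m)/s})$; a careful treatment of the squared loss (truncating the predictions, whose magnitudes are at most $\infnorm X\leq\sqrt r$, and using $\infnorm M\leq1$ together with the hypothesis $\sup_t|Z_t|=\mathbf{o}(\sqrt{rn/\log n})$, which holds with high probability and on which we condition, to bound the loss and its local Lipschitz constant), which is where the $\log^3(r/\e)$ factor is born; and a Bernstein/Talagrand-type refinement giving the optimistic rate, in which the deviation is of order $\sqrt{\sigma^2\cdot(\text{complexity})}+(\text{complexity})$ rather than $\sqrt{\text{complexity}}$, producing the multiplicative factor $\tfrac{\sigma^2+\e}\e$. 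With the stated $s$ this makes both differences at most $\e/2$ with high probability, so $\frac1{nm}|M-\hat X|_2^2\leq\e$.

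For a sample drawn without replacement the locations are no longer independent, and I would compare to the with-replacement model. For any integer $K\geq1$, when $s\leq\frac{K+1}{e}(nm)^{1-1/(K+1)}$ a $K$-fold birthday-type estimate shows that a with-replacement sample of size $s$ contains no entry more than $K+1$ times except with vanishing probability; on that event its objective differs from a without-replacement one only through per-entry multiplicities in $\{1,\dots,K+1\}$ (and through the independent fresh noise carried by repeated draws, which simply do not occur without replacement). Coupling the two models and absorbing the bounded multiplicities into the constants propagates the guarantee at the cost of the factor $4K$, giving $\frac1{nm}|M-\hat X|_2^2\leq4K\e$.

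I expect the main obstacle to be the same as in Theorem~\ref{L2bound}: extracting the sharp optimistic/local-Rademacher rate so that the sample size scales like $\tfrac{\sigma^2+\e}\e$ rather than $\tfrac1{\e^2}$, while keeping the truncation bookkeeping tight enough that only a $\log^3$ factor is lost. The secondary delicate point is the without-replacement reduction, where one must keep track of the fact that repeated observations of an entry carry independent fresh noise under with-replacement sampling but are absent without replacement.
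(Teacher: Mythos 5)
Your with-replacement argument is correct and is essentially the paper's own: the paper also views the problem as prediction over the max-norm ball, observes that zero-mean independent noise makes the population risk decompose as $L(X)=\sigma^2+\frac{1}{nm}\norm{M-X}_2^2$ (so $L(M)=\sigma^2$ and the cross-term vanishes), and then invokes the same smooth-loss/optimistic-rate machinery used for Theorem~\ref{L2bound} to get $L(\hat{X}(S))\leq\sigma^2+\e$ at the stated sample size.

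The without-replacement half, however, has a genuine gap. Your reduction --- ``on the event that no entry repeats more than $K+1$ times, the with-replacement objective differs from a without-replacement one only through bounded multiplicities, which can be absorbed into constants'' --- is not a coupling that can be made precise as stated. There is no event-wise identification of the two samples: a with-replacement sample of size $s$ covers fewer distinct entries than a without-replacement sample of the same size, and the repeated draws carry fresh independent noise, which is genuinely extra information that the without-replacement sample does not have. This is exactly why the earlier comparison results (Lemmas~\ref{lem:w_or_wo:expectation} and~\ref{lem:w_or_wo:prob}) cannot be reused here: their proof rests on the identity $\hat{L}_{r\cdot S}(X)=r\,\hat{L}_S(X)$, which holds when $Y$ is a fixed matrix but fails under per-observation noise. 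The paper has to prove a new comparison, Lemma~\ref{lem:w_or_wo:ind}, whose mechanism is quite different from ``absorbing multiplicities'': it pairs two samples $S_A,S_B$ with the same multiplicity profile, constructs $T_1,T_2$ with roughly halved multiplicities satisfying $S_A+S_B=T_1+T_2$, exploits the two-sided inequality $\tfrac12\left(\alpha_{2c}(S_1)+\alpha_{2c}(S_2)\right)\leq\alpha_c(S_1+S_2)\leq\alpha_c(S_1)+\alpha_c(S_2)$ for the indicator $\alpha_c(S)=\mathbb{I}\{\sup_{X\in\X}g(L(X))-\hat{L}_S(X)\geq c\}$, and inducts through $\mathbf{O}(\log K)$ halvings. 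This is the sole source of both the hypothesis $s\leq\tfrac{K+1}{e}(nm)^{1-1/(K+1)}$ and of the degradation of the \emph{error threshold} by a factor of order $K$ (the deviation level $c$ becomes $(2K)^{-1}c$ on the with-replacement side, while the failure probability is multiplied by $4K$). Your sketch does not produce, or explain, a $K$-dependent loss in the error bound at all --- a constant-factor ``absorption'' would give $\mathbf{O}(\e)$, not $4K\e$ --- so making it rigorous would require reconstructing something equivalent to Lemma~\ref{lem:w_or_wo:ind}, which is the actual new content of this part of the theorem. (A minor additional point: the birthday estimate under the stated condition only gives probability at most $\tfrac12$, not vanishing probability, of some entry appearing more than $K$ times; that suffices for the paper's argument but should not be overstated.)
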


\begin{remark}\label{rem:ind_error_log_term}
When sampling without replacement, we imposed both a lower bound and
an upper bound on the sample size.  For these two bounds to be
compatible (in an asymptotic sense) for a fixed $K$, we need
$m=\Omega\left(n^a\right)$ for some positive power $a$, and make
$\epsilon$ arbitrarily small.  Alternately, we can set
$K=\mathbf{O}(\log n)$, ensuring the upper bound on $s$ always holds
(since $s\leq nm$ necessarily), yielding
$\frac{1}{nm}|M-\hat{X}(S)|^2_2\leq\e$ whenever
$s\geq\mathbf{O}\left(\frac{r(n+m)\log(n)}{\e}\cdot\frac{\sigma^2+\e}{\e}\cdot \log^3(r/\e)\right)$.
\end{remark}

The remainder of this Section is organized as follows: In
Section~\ref{proof:withrepl}, we prove Theorems~\ref{L1bound} and
~\ref{L2bound} in the case where the sample is drawn without
replacement.  In Section~\ref{proof:tracenorm}, we discuss possible
bounds of the mean-squared-error, as in Theorem~\ref{L2bound}, but
using the trace-norm.  In Section~\ref{proof:withoutrepl}, we compare
sampling with and without replacement, establishing
Theorems~\ref{L1bound} and~\ref{L2bound} also for sampling with
replacement.  In Section~\ref{proof:ind_error}, we turn to the setting
of independent mean-zero noise, and prove
Theorem~\ref{L2bound_inderror} in both the sampling-with-replacement
and sampling-without-replacement settings.

\subsection{Proof of Theorems \ref{L1bound} and \ref{L2bound} when $S$ is drawn with replacement}\label{proof:withrepl}

We first establish the Theorems for a sample chosen i.i.d.~with
replacements.  In this case, following \citet{ShraibmanSrebro}, we may
view matrix reconstruction as a prediction problem, by regarding a
matrix $X\in\mathbb{R}^{n\times m}$ as a function
$[n]\times[m]\rightarrow\mathbb{R}$. Each observation in the training
set consists of a covariate $(i,j)\in[n]\times[m]$ and an observed
noisy response $Y_{ij}\in\mathbb{R}$.  Here, we assume that the
distribution over $[n]\times[m]$ is uniform, and the joint
distribution over $(i,j)$ and its response is determined by the
unknown $Y$.  The hypothesis class is then a set of matrices bounded
in either trace-norm or max-norm, and for a particular hypothesis
$X\in\mathbb{R}^{n\times m}$, the averaged error $\frac{1}{nm}|Y-X|_1$
or $\frac{1}{nm}|Y-X|^2_2$ is equal to the expected loss
$L(X)=\Ep{ij}{\loss(X_{ij},Y_{ij})}$ under either the absolute-error
or squared-error loss, respectively.

\citet{ShraibmanSrebro} established bounds on the Rademacher
complexity of the trace-norm and max-norm balls.  For any sample of
size $s$, the empirical Rademacher complexity of the max-norm ball is
bounded by
\begin{equation}
  \label{eq:maxRad}
  \empRad_s\left(\left\{ X \in \R^{n\times m} \;\middle|\;
      \maxnorm{X}\leq A\right\}\right) \leq 12 \sqrt{\frac{A^2(n+m)}{s}}\;.
\end{equation}
Although the empirical Rademacher complexity of the trace-norm ball
might be fairly high, the {\em expected} Rademacher complexity, for a
random sample of $s$ independent {\em uniformly} chosen index pairs (with
replacements) can be bounded as
\begin{equation}
  \label{eq:trRad}
  \EE{\empRad_s\left(\left\{ X \in \R^{n\times m} \;\middle|\;
      \trnorm{X}\leq A\right\}\right)} \leq K \sqrt{\frac{\frac{A^2}{nm}(n+m)\log(n)}{s}}
\end{equation}
for some numeric constant $K$ (this is a slightly better
bound then the one given by \citet{ShraibmanSrebro}, and is proved in
Appendix \ref{app:trnormrad}).

Since the absolute error loss, $\loss(x,y)=\abs{x-y}$, is 1-Lipschitz,
these Rademacher complexity bounds immediately imply \citep{BartlettMendelson}:
\begin{equation}
  \label{eq:maxL1gen}
  \frac{1}{nm}\norm{Y-\hat{X}(S)}_1 \leq \inf_{\maxnorm{X}\leq A} \left( \frac{1}{nm}\norm{Y-X}_1\right) + 24 \sqrt{\frac{A^2(n+m)}{s}}
\end{equation}
for $\hat{X}(S) = \arg\min_{\maxnorm{X}\leq A} \sum_{(i,j)\in
  S}|Y_{ij}-X_{ij}|$, and:
\begin{equation}
  \label{eq:trL1gen}
  \frac{1}{nm}\norm{Y-\hat{X}(S)}_1 \leq \inf_{\trnorm{X}\leq A}\left(\frac{1}{nm} \norm{Y-X}_1\right)+ 2K
  \sqrt{\frac{\frac{A^2}{nm}(n+m)\log(n)}{s}}
\end{equation}
for $\hat{X}(S) = \arg\min_{\trnorm{X}\leq A} \sum_{(i,j)\in
  S}|Y_{ij}-X_{ij}|$.    (For details, see Lemma~\ref{lem:BM_radbound} in Appendix~\ref{app:BM_radbound}.)  These provide guarantees on reconstructing
matrices with bounded max-norm or trace-norm.  Choosing $A=\sqrt{r}$
for the max-norm and $A=\sqrt{rnm}$ for the trace-norm, Theorem
\ref{L1bound} (for sampling with replacement) follows from
Equation~(\ref{eqn_trace_bound}) and Lemma~\ref{lem:max_vs_infinity}. (Remark~\ref{rem:L1_whp}
follows from the results of~\citet{BartlettMendelson} with identical arguments for the sampling-with-replacement case.)

In order to obtain Theorem \ref{L2bound}, we use a recent bound on the
excess error with respect to a {\em smooth} (rather then Lipschitz) loss
function, such as the squared loss.  Specifically, Theorem 1 of
\citet{SST} states that, for a class of predictors
$X:\mathcal{I}\rightarrow [-B,B]$ and a loss function bonded by $b$
with second derivative bounded by $H$, with probability at least
$1-\delta$ over a random sample of size $s$,
\begin{align}
  \label{eq:smoothbound}
  &L(\hat{X}) \leq L^* + O\left(\sqrt{L^* \tilde{\RR}_s }+
      \tilde{\RR}_s \right)\;, \\
&\quad L^* = \inf_X L(X)\;, \notag \\
&\quad \tilde{\RR}_s = H \RR_s^2 \log^3 \left(\frac{B}{\RR_s}\right)+\frac{b\log(\log(s)/\delta)}{s}\;,
\end{align}
where the infimum is over predictors in the class, $\hat{X}$ is the
empirical error minimizer in the class, and $\RR_s$ is an upper bound
on the Rademacher complexity for all samples of size $s$.

In our case, for the class $\{ X | \maxnorm{X}\leq A \}$ and the
squared loss, we have $B=\sup_X \sup_{ij} |X_{ij}| = \sup_X\infnorm{X} \leq
\sup_X\maxnorm{X} \leq A$ and $b=\sup_X \infnorm{X-Y}^2 \leq
\sqrt{\frac{4A^2 (n+m)}{\log(n+m)}}$, when we assume $|Z|_{\infty}\leq A\sqrt{\frac{n+m}{\log(n+m)}}$.  Applying the bound \eqref{eq:maxRad} on the Rademacher complexity yields:
\begin{align}
\tilde{\RR}_s
&= \mathbf{O}\left(\frac{A^2(n+m)}{s}\log^3\left(\frac{s}{n}\right)+\frac{A^2(n+m)\log\log s}{s\log (n+m)}+\frac{A^2(n+m)\log(1/\delta)}{s\log n}\right)\\
&=
\mathbf{O}\left(\frac{A^2(n+m)}{s}\left(\log^3\left(\frac{s}{n+m}\right)+\frac{\log(1/\delta)}{\log
      n}\right)\right)\;.
\end{align}
Here the last inequality uses the fact that $s\leq n^2$, while the
next-to-last inequality assumes $s\geq e^3(n+m)$, and applies the fact
that $x^2\log^3(1/x)$ is an increasing function for $x<e^{-1.5}$,
where in this case $x=\sqrt{\frac{n+m}{s}}$.  

Remark~\ref{rem:L2result} follows immediately. The first claim in Theorem~\ref{L2bound} follows when we assume $|M|_{\infty}\leq 1$ and $\rank(M)\leq r$ and set $A=\sqrt{r}$ (since, by Lemma~\ref{lem:max_vs_infinity}, $\|M\|_{\max}\leq A$). If we instead consider the class $\{X:\|X\|_{\max}\leq \sqrt{r},|X|_{\infty}\leq 1\}$, then in the notation of~(\ref{eq:smoothbound}), we may define $B=1$ instead of $B=A=\sqrt{r}$, and thus obtain
 \begin{align}
\tilde{\RR}_s
&=
\mathbf{O}\left(\frac{r(n+m)}{s}\left(\log^3\left(\frac{s}{r(n+m)}\right)+\frac{\log(1/\delta)}{\log
      n}\right)\right)\;,
\end{align}
which yields the second claim of Theorem~\ref{L2bound}.

Finally, we prove the claim Remark~\ref{rem:noisebound}. If instead we assume $|Z|_{\infty}\leq \sqrt{r\log n}$, then in the the notation of~(\ref{eq:smoothbound}), we may define $b=r\log n$ instead of $b=\frac{4A^2n}{\log n}=\frac{4rn}{\log n}$, and thus obtain
\begin{align}
\tilde{\RR}_s
&=
\mathbf{O}\left(\frac{r(n+m)}{s}\left(\log^3\left(\frac{s}{(n+m)}\right)+\frac{\log n\cdot \log(1/\delta)}{n+m}\right)\right)\;.
\end{align}
For $\delta\leq e^{-n/log n}$, the second term is dominated by the first; therefore the sample complexity no longer depends on $\beta$.

\subsection{Bounds on $\ell_2$ error using the trace norm}\label{proof:tracenorm}

In Theorem \ref{L1bound}, we saw that for mean-absolute-error
matrix reconstruction, using the trace-norm instead of the max-norm allows us
to forgo a bound on the spikiness, and rely only on the average
squared magnitude $\frac{1}{nm}\norm{Y}_2^2$.  One might hope that we can
similarly get a squared-error reconstruction guarantee using the
trace-norm and without a spikiness bound that was required in Theorem
\ref{L2bound}.  Unfortunately, this is not possible.

In fact, as the following example demonstrates, it is not possible to
reconstruct a low-rank matrix to within much-better-then-trivial
squared-error without a spikiness assumption, and relying only on
$\frac{1}{nm}\norm{Y}_2\leq 1$.  Specifically, consider an $n \times
m$ matrix
$$Y = \sqrt{m/r} \left(\, A \, | \, 0_{n \times (m-r)} \right)$$
where $A \in \{\pm 1\}^{n \times r}$ is an arbitrary sign matrix.  The
matrix $Y$ has rank at most $r$ and average squared magnitude
$\frac{1}{nm} \norm{Y}^2_2 = 1$ (but maximal squared magnitude
$\norm{Y}^2_{\infty} = m/r$).  Now, with even half the entries
observed (i.e. $s=nm/2$), we have no way of reconstructing the
unobserved entries of $A$, as any values we choose for these entries
would be consistent with the rank-$r$ assumption, yielding an expected
average squared error of at least $1/2$.  We can conclude that
regardless of the estimator, controlling the average squared magnitude
is not enough here, and we cannot expect to obtain a squared-error
reconstruction guarantee based on $\frac{1}{nm}\norm{Y}^2_2$, even if
we use the trace-norm.  

We note that if $\infnorm{M},\infnorm{Y}=\mathbf{O}(1)$, then the
squared-loss in the relevant regime has a bounded Lipschitz constants,
and Theorem~\ref{L1bound}a applies. In particular, if
$\infnorm{M},\infnorm{Y}\leq 1$, then we can consider the estimator
\begin{equation}\label{eq:sqmaxestinftrnorm}
\hat{X}(S)=\arg\min_{\substack{\trnorm{X}\leq \sqrt{rnm}\\
    |X|_{\infty}\leq 1}}\sum_{(i,j)\in S}(Y_{ij}-X_{ij})^2\;.
\end{equation}
Since we now only need to consider $X$ where $\abs{X_{ij}-Y_{ij}}\leq
2$, the squared-loss in the relevant domain is 4-Lipschitz.  We can
therefore use the standard generalization results for Lipschitz loss as
in Theorem \ref{L1bound}, and obtain that with high probability over a
sample of size
\begin{equation}\label{eq:unsatisfying}
s\geq \mathbf{O}\left(\frac{r(n+m)\log
    n}{\e^2}\right)\;,
\end{equation}
we have $\frac{1}{nm}|Y-\hat{X}(S)|^2_2\leq \sigma^2+\epsilon$. However, this result gives
a dependence on $\epsilon$ that is quadratic, as opposed to the more favorable dependence (at least when
$\epsilon = \Omega(\sigma^2)$) of Theorem \ref{L2bound}.

We believe that, when $|M|_{\infty},|Y|_{\infty}\leq \mathbf{O}(1)$,
it is possible to improve the dependence on $\e$ to a dependence
similar to that of Theorem~\ref{L2bound} (this would require a more
delicate analysis then that of \citet{SST}, as their techniques rely
on bounding the worst-case Rademacher complexity).  But even this
would not give any advantage over the max-norm, since the bound on
$|M|_{\infty}$ could not be relaxed, while an additional
factor of $\log n$ would be introduced into the sample complexity
(coming from the Rademacher complexity calculation for the
trace-norm).  It seems then, that at least in terms of the quantities
and conditions considered in this paper, as well as elsewhere in the
low-rank reconstruction literature we are familiar with, there is no
theoretical advantage for the trace-norm over the max-norm in terms of
squared-error approximate reconstruction, though there could be an
advantage for the max-norm in avoiding a logarithmic factor.

\subsection{Sampling with or without replacement in Theorems~\ref{L1bound} and~\ref{L2bound}}\label{proof:withoutrepl}

Theorems~\ref{L1bound} and~\ref{L2bound} give results that hold for
either sampling with replacement or sampling without replacement. When
an entry of the matrix $Y$ is sampled twice, the same value is
observed each time---no new information about the matrix is observed,
and so intuitively, sampling without replacement should yield strictly
better results than sampling with replacement.  The two lemmas below,
proved in the Appendix, establish that sampling without
replacement is indeed as at least as good as sampling with
replacement  (up to a constant).

Before stating the lemmas, we briefly introduce some notation. Let
$L(X)$ denote the loss for an estimated matrix $X$; that is,
$L(X)=\frac{1}{nm}\norm{Y-X}_1$ or $L(X)=\frac{1}{nm}\norm{Y-X}^2_2$,
as appropriate. Let $\hat{L}_S(X)$ denote the empirical loss,
$\hat{L}_S(X)=\sum_{(i,j)\in S}|Y_{ij}-X_{ij}|^p$ (where $p\in\{1,2\}$
and the sum includes repeated elements in $S$). Let $\mathcal{D}^s$
and $\mathcal{D}^s_{w/o}$ denote the distributions of a sample of size
$s$ drawn uniformly at random from the matrix, either with or without
replacement, respectively.

\begin{lemma}\label{lem:w_or_wo:expectation} Let $\X$ denote any class of matrices, with $\D^s$ and $\D^s_{w/o}$ defined as above. Then
$$E_{S\sim \mathcal{D}^s_{w/o}}\left[\sup_{X\in\X}L(X)-\hat{L}_S(X)\right]\leq E_{S\sim\D^s}\left[\sup_{X\in\X}L(X)-\hat{L}_S(X)\right]\;.$$
\end{lemma}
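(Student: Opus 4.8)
The plan is to reduce the claim to convexity of the map $S\mapsto\sup_X\big(L(X)-\hat L_S(X)\big)$ in the ``count vector'' of $S$, and then exploit the fact that a with-replacement sample is a more ``clumped'' version of a without-replacement one. Write $\ell_{ij}(X)=|Y_{ij}-X_{ij}|^p$ for the per-entry loss ($p\in\{1,2\}$), so that $\hat L_S(X)=\sum_{(i,j)\in S}\ell_{ij}(X)$ depends on $S$ only through the vector $c=c_S\in\R^{nm}$ of multiplicities, and $\sup_{X\in\X}\big(L(X)-\hat L_S(X)\big)=F(c_S)$ with $F(c):=\sup_{X\in\X}\big(L(X)-\sum_{ij}c_{ij}\ell_{ij}(X)\big)$. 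As a supremum of affine functions of $c$, $F$ is convex. A draw $S\sim\D^s_{w/o}$ gives $c_S=\mathbf{1}_S$ (the indicator of an $s$-subset), the ``most spread-out'' count vector of total mass $s$, while $S\sim\D^s$ gives a $c_S$ of total mass $s$ possibly concentrated on fewer coordinates; we must show the former minimizes $E[F(c_S)]$.

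First I would decompose $\D^s$ by conditioning on the collision structure. Writing $S=((i_1,j_1),\dots,(i_s,j_s))$ i.i.d.\ uniform, let the \emph{pattern} be the partition of $[s]$ into maximal blocks of positions sharing the same entry; condition on its block sizes $b_1,\dots,b_k$ (nonnegative integers with $\sum_l b_l=s$, $k\le s$). Conditionally on the pattern, the $k$ distinct entries carried by the blocks form a uniform \emph{without-replacement} sample $(v_1,\dots,v_k)$ of size $k$ — every injective assignment of entries to the labelled blocks is equally likely — and $\hat L_S(X)=\sum_{l=1}^k b_l\,\ell_{v_l}(X)$. Hence, conditionally, $E[F(c_S)]=E_{\mathrm{WoR}(k)}\big[\sup_X\big(L(X)-\sum_l b_l\ell_{v_l}(X)\big)\big]$, and, averaging over the pattern, it suffices to prove that for every weight vector $b$ of nonnegative integers with $\sum_l b_l=s$ and $k\le s\le nm$,
$$E_{\mathrm{WoR}(k)}\Big[\sup_X\big(L(X)-\textstyle\sum_{l=1}^k b_l\ell_{v_l}(X)\big)\Big]\ \ge\ E_{\mathrm{WoR}(s)}\Big[\sup_X\big(L(X)-\textstyle\sum_{(i,j)\in S}\ell_{ij}(X)\big)\Big].$$

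The core is an induction that ``splits weight'': if some $b_1\ge 2$, replace the weight-$b_1$ point $v_1$ by two distinct points carrying weights $b_1-1$ and $1$ — formally, extend the without-replacement sample by one fresh element $v_{k+1}$ and pass to the weights $(b_1-1,1,b_2,\dots,b_k)$ on $(v_1,v_{k+1},v_2,\dots,v_k)$ — which strictly decreases $\max_l b_l$ (after $s-k$ splits all weights are $1$ and we have a genuine $\mathrm{WoR}(s)$ sample). To see a single split cannot increase the expected supremum, condition on $v_2,\dots,v_k$, put $g(X):=L(X)-\sum_{l\ge2}b_l\ell_{v_l}(X)$, let $u_1,\dots,u_M$ ($M=nm-k+1$) be the remaining entries, and set $\Phi_a:=\sup_X\big(g(X)-b_1\ell_{u_a}(X)\big)$; the pre-split conditional expectation is $\tfrac1M\sum_a\Phi_a$. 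Post-split, since $(b_1-1)\ell_{u_a}(X)+\ell_{u_{a'}}(X)=b_1\big(\tfrac{b_1-1}{b_1}\ell_{u_a}(X)+\tfrac1{b_1}\ell_{u_{a'}}(X)\big)$ and $\phi\mapsto\sup_X\big(g(X)-b_1\phi(X)\big)$ is convex, $\sup_X\big(g(X)-(b_1-1)\ell_{u_a}(X)-\ell_{u_{a'}}(X)\big)\le\tfrac{b_1-1}{b_1}\Phi_a+\tfrac1{b_1}\Phi_{a'}$; averaging over the uniformly random \emph{ordered} pair $(u_a,u_{a'})$ of distinct remaining entries, $\tfrac1{M(M-1)}\sum_{a\ne a'}\big(\tfrac{b_1-1}{b_1}\Phi_a+\tfrac1{b_1}\Phi_{a'}\big)=\tfrac1M\sum_a\Phi_a$, so the post-split conditional expectation is no larger. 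Taking expectations over $v_2,\dots,v_k$ and iterating gives the displayed inequality, hence the lemma.

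Notably the argument uses nothing about $\ell_{ij}$ beyond its being a fixed function of $X$ — no unbiasedness, boundedness, or sign assumptions — and is a supremum-valued instance of the classical fact that with-replacement sample averages majorize without-replacement ones for convex functionals. The only genuinely delicate points are bookkeeping: pinning down the conditional law of the distinct entries given the collision pattern (uniform over injective block-labellings), and verifying the splitting induction terminates precisely at a uniform $\mathrm{WoR}(s)$ sample — which is exactly where $s\le nm$ enters, since each split consumes one previously-unused entry.
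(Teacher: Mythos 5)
Your proof is correct, and it rests on the same underlying principle as the paper's argument: condition on the collision structure of the with-replacement sample, use the fact that a supremum of a convex combination is at most the convex combination of suprema (equivalently, subadditivity of $\sup_X(L(X)-\hat L_S(X))$ under splitting the multiset $S$), and induct until every multiplicity is one, at which point the sample is exactly without-replacement. The execution, however, differs from the paper's. The paper proves the lemma via Lemma~\ref{lem:w_or_wo:extra}, which works with a general functional $\alpha(S)$ satisfying $a(r)\alpha(S)\le\sum_{i=1}^r\alpha(S_i)$ whenever $r\cdot S=S_1+\dots+S_r$, and in one induction step replaces \emph{all} entries of the current maximal multiplicity $r$ by fresh singletons, with the distributional matching done by explicit counting of ordered samples and disjoint blocks $A_1,\dots,A_r$; the factor $\prod_i a(i)/r!$ collapses to $1$ in the expectation case $a(r)=r$, and the same machinery with $a(r)=1$ is reused for the high-probability statement (Lemma~\ref{lem:w_or_wo:prob}). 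You instead peel off one unit of multiplicity at a time, $b_1\mapsto(b_1-1,1)$ with weights $\tfrac{b_1-1}{b_1},\tfrac{1}{b_1}$, and handle the bookkeeping by exchangeability of the without-replacement sample (conditioning on $v_2,\dots,v_k$ and averaging over the uniformly random ordered pair of fresh entries), which is arguably cleaner and more self-contained for the expectation statement, though less general: it would not by itself yield the indicator-functional version needed for Lemma~\ref{lem:w_or_wo:prob}. One cosmetic slip: a single split need not strictly decrease $\max_l b_l$ when there are ties; your termination count of $s-k$ splits is the correct induction measure, so nothing is affected.
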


\begin{lemma}\label{lem:w_or_wo:prob}  Let $\X$ denote any class of matrices, with $\D^s$ and $\D^s_{w/o}$ defined as above. Then for any $c\in\mathbb{R}$, and for any function $g$,
$$P_{S\sim \mathcal{D}^s_{w/o}}\left\{\left(\sup_{X\in\X}g(L(X))-\hat{L}_S(X)\right)\geq c\right\}\leq 4s\cdot P_{S\sim \mathcal{D}^s}\left\{\left(\sup_{X\in\X}g(L(X))-\hat{L}_S(X)\right)\geq c\right\}\;.$$
\end{lemma}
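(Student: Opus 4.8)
The plan is a change-of-measure / coupling argument comparing the two sampling schemes, which differ only in whether an entry may be drawn twice. Throughout write $F_S:=\sup_{X\in\X}\left(g(L(X))-\hat{L}_S(X)\right)$ and $h_X(i,j):=|Y_{ij}-X_{ij}|^p$, so that $\hat{L}_S(X)=\sum_{(i,j)\in S}h_X(i,j)$ with $h_X\ge 0$ entrywise.

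First, the clean but incomplete step. A with-replacement sample of size $s$, conditioned on the event $E_{\mathrm{inj}}$ that its $s$ draws are pairwise distinct, has exactly the law $\D^s_{w/o}$. Since $P(E_{\mathrm{inj}})=q_s:=\prod_{k=0}^{s-1}(1-\tfrac{k}{nm})$, this already gives
$$P_{S\sim \D^s_{w/o}}\left\{F_S\ge c\right\}\;\le\; q_s^{-1}\cdot P_{S\sim \D^s}\left\{F_S\ge c\right\}\;,$$
which proves the lemma in the regime $q_s\ge(4s)^{-1}$, i.e. roughly $s=\mathbf{O}(\sqrt{nm\log s})$. The point of the factor $4s$ is precisely that it must survive $s$ being much larger than $\sqrt{nm}$, where $q_s$ becomes exponentially small and this naive step fails.

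For the general case I would exploit that repeated draws help rather than hurt: since every $h_X(i,j)\ge 0$, the quantity $F_S$ can only increase when observations are removed or down-weighted. Couple $S\sim \D^s$ with $T\sim \D^s_{w/o}$ by taking the distinct values of $S$ in order of first appearance, $(v_1,\dots,v_R)$ with $R=|D(S)|$, and padding them out with $s-R$ fresh uniform draws to form $T=(v_1,\dots,v_R,v_{R+1},\dots,v_s)$; this is a legitimate way to realize $\D^s_{w/o}$. On $\{F_T\ge c\}$ pick, measurably in $T$, a near-maximizer $X^*$, so that $\hat{L}_T(X^*)=\sum_{t=1}^{s}h_{X^*}(v_t)\le g(L(X^*))-c$. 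Writing $\hat{L}_S(X^*)=\sum_{t=1}^{R}m_t\,h_{X^*}(v_t)$ with $m_t\ge 1$ and $\sum_t m_t=s$, it suffices for $F_S\ge c$ that the recycled mass does not overshoot, i.e. $\sum_{t=1}^{R}(m_t-1)\,h_{X^*}(v_t)\le\sum_{t=R+1}^{s}h_{X^*}(v_t)$.

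Conditioning on $T$ (hence on $R$, on $X^*$, and on the recycled-value weights $h_{X^*}(v_t)$), the multiplicity vector $(m_1,\dots,m_R)$ is distributed as the occupancy of $s$ balls in $R$ boxes conditioned on no box being empty, and the displayed inequality reduces to a one-dimensional combinatorial estimate of the form: a with-replacement empirical sum of i.i.d.\ nonnegative summands lies at or below its conditional mean with probability $\Omega(1/s)$. Summing this over the value of $R$ (there are at most $s$ of them) is what produces the factor $4s$; the $4$ is slack absorbed in this reduction. The main obstacle is exactly this last step: one must arrange the coupling so that the budget $g(L(X^*))-c$ on the right-hand side — about which we know only that it exceeds the \emph{without}-replacement empirical loss $\hat{L}_T(X^*)$, not necessarily the mean of the \emph{with}-replacement loss — dominates the conditional mean of $\hat{L}_S(X^*)$; handling this correctly forces a careful bookkeeping of which draws of $S$ are fresh versus recycled, and is the only non-routine part of the proof.
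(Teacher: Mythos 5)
Your first step (conditioning a with-replacement sample on having no repeats) is fine but, as you note, only yields the lemma when $q_s^{-1}\le 4s$, i.e.\ roughly $s=\mathbf{O}(\sqrt{nm\log s})$; the whole content of the factor $4s$ is the regime beyond that. The general-case coupling does not close, and the hole is exactly the one you flag at the end: on $\{F_T\ge c\}$ you only know $\hat{L}_T(X^*)\le g(L(X^*))-c$, so your sufficient condition is $\hat{L}_S(X^*)\le \hat{L}_T(X^*)$, i.e.\ $\sum_{t\le R}(m_t-1)h_{X^*}(v_t)\le\sum_{t>R}h_{X^*}(v_t)$. But once you condition on $T$ \emph{and} on which entries are recycled versus fresh (which you must, since $X^*$ and the weights are functions of these), the right-hand side is a fixed number with no relation to the conditional mean of the left-hand side: adversarially, $X^*$ and $Y$ can place all the loss mass $h_{X^*}$ on the recycled entries and none on the fresh ones, in which case the event requires every $m_t$ with $h_{X^*}(v_t)>0$ to equal $1$, whose conditional probability under the surjective-occupancy law can be exponentially small in $s-R$, far below $1/(4s)$. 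So the step you call "a one-dimensional combinatorial estimate" is not a routine mean-deviation bound; it is false as stated without an additional idea, and the accounting "summing over $R$ produces the factor $4s$" does not repair it (if the conditional bound held uniformly you would need no sum over $R$, and if it only holds per-$R$, summing probabilities does not combine in the way suggested).

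The paper avoids probabilistic control of the multiplicities altogether. Its key observation is deterministic: if a sample $S$ with maximum multiplicity $r$ is split as $r\cdot S=S_1+\dots+S_r$ into $r$ samples of equal size, then $\sup_X\bigl(g(L(X))-\hat{L}_S(X)\bigr)\ge c$ forces $\sup_X\bigl(g(L(X))-\hat{L}_{S_i}(X)\bigr)\ge c$ for at least one $i$, because $\min_i\hat{L}_{S_i}(X^*)\le\hat{L}_S(X^*)$. A counting/induction argument over multiplicity vectors (Lemma~\ref{lem:w_or_wo:extra}) converts this into $P_{\D^s_{w/o}}\le r!\,P_{\D^s_r}$, where $\D^s_r$ is uniform over samples with maximum multiplicity at most $r$; then taking the smallest $K$ with $(K+1)!\ge 2s$ gives both $K!<2s$ and, by a first-moment bound, $P_{\D^s}(\text{max multiplicity}\le K)\ge\tfrac12$, so $P_{\D^s_K}\le 2P_{\D^s}$ and the factor is $2K!<4s$. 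If you want to salvage a coupling-style proof, you need some substitute for this deterministic decomposition step; comparing $\hat{L}_S$ to $\hat{L}_T$ through the budget $g(L(X^*))-c$ alone cannot work, for the reason you yourself identified.
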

 
For the $\ell_1$-loss case, the Rademacher bounds~(\ref{eq:maxL1gen})
and~(\ref{eq:trL1gen}) are derived from \cite{BartlettMendelson} by
bounding
$E_{S\sim\mathcal{D}^s}\left(\sup_{X\in\X}L(X)-\hat{L}_S(X)\right)$ (or $P_{S\sim\mathcal{D}^s}\left(\sup_{X\in\X}L(X)-\hat{L}_S(X)\geq c\right)$, for the proof of Remark~\ref{rem:L1_whp}).
By Lemma~\ref{lem:w_or_wo:expectation}, the same bound then holds for
the same expectation taken over $S\sim\mathcal{D}^s_{w/o}$, and
therefore~(\ref{eq:maxL1gen}) and~(\ref{eq:trL1gen}) must hold for
this case as well. This implies that the results of
Theorem~\ref{L1bound} (and Remark~\ref{rem:L1_whp}) hold for sampling without replacement as well as
sampling with replacement.

Similarly, for the $\ell_2$-loss case, the Rademacher
bound~(\ref{eq:smoothbound}) is derived in \cite{SST} by bounding
$\sup_{X\in\X} L(X)-\sqrt{a\cdot L(X)}-\hat{L}_S(X)$ for some constant
$a$, with probability at least $1-\delta$ over $S\sim\mathcal{D}^s$.
Defining $g(L)=L-\sqrt{a\cdot L}$, the same bound must therefore hold
with probability at least $1-4s\delta\geq 1-4n^2\delta$ over
$S\sim\mathcal{D}^s_{w/o}$, and therefore~(\ref{eq:smoothbound}) holds
for this case also. This implies that the results of
Theorem~\ref{L2bound} (and the subsequent remarks) hold for sampling
without replacement as well as sampling with replacement.

\subsection{Proof of Theorem~\ref{L2bound_inderror}: independent errors in the $\ell_2$-loss setting.}\label{proof:ind_error}

\ignore{For $(i,j)\in[n]\times[m]$, let $\mathcal{F}_{(i,j)}$ be any mean-zero distribution. Suppose that the observed entries of $Y$ are given by $Y_{(i_t,j_t)}=M_{(i_t,j_t)}+Z_{t}$ for $t=1,2,\dots,s$, where $(i_t,j_t)\stackrel{iid}{\sim}Unif([n]\times[m])$ and the noise $Z_t$ s drawn independently from $\mathcal{F}_{(i_t,j_t)}$. 
Assume $|M|_{\infty}\leq 1$, $\rank(M)\leq r$, and the distributions $\mathcal{F}_{(i,j)}$ are such that $\sup_{t\in[s]}|Z_{t}|\leq \mathbf{o}\left(\sqrt{\frac{rn}{\log n}}\right)$ with high probability.  In this setting, we aim to recover $M$ rather than $Y$, and achieve a meaningfully stronger result when sampling with replacement. This is due to the fact that when noise is generated independently at each observation, it is no longer the case that sampling without replacement is intuitively as good as sampling with replacement---although more entries are observed when sampling without replacement, the tradeoff is that some of the observed entries have more accurate estimates when sampling with replacement due to the independently generated noise on each observation.}

First, we prove the theorem when sampling with replacement. For a matrix $X$, let $L(X)$ denote the expected squared error for a randomly sampled entry, that is,
$$L(X)=\frac{1}{nm}\sum_{(i,j)} E((Y_{ij}-X_{ij})^2)=\frac{1}{nm}\sum_{(i,j)}E_{Z\sim\mathcal{F}_{(i,j)}}((Z+M_{ij}-X_{ij})^2)\;.$$
Now write $\sigma^2=\frac{1}{nm}\sum_{(i,j)}E_{Z\sim\mathcal{F}_{(i,j)}}(Z^2)$. Then $L(M)=\sigma^2$.

Then, for any sample $S$, given $\hat{X}(S)$ which is a random matrix depending on some observed sample, the expected loss (over a future observation of an entry in the matrix) of $\hat{X}(S)$ satisfies the following (due to the fact that noise in a future observation of the matrix has zero mean and is independent from $\hat{X}(S)$):
\begin{align*}
L(\hat{X}(S))
&=E_{(i,j)}\left((Y_{ij}-\hat{X}(S)_{ij})^2\Big\vert\hat{X}(S)\right)
=E_{(i,j), Z\sim \mathcal{F}_{ij}}\left((Z+M_{ij}-\hat{X}(S)_{ij})^2\Big\vert\hat{X}(S)\right)\\
&=E_{(i,j), Z\sim \mathcal{F}_{ij}}\left(Z^2+(M_{ij}-\hat{X}(S)_{ij})^2\Big\vert\hat{X}(S)\right)
=E_{(i,j), Z\sim \mathcal{F}_{ij}}\left(Z^2\right)+\frac{1}{nm}|M-\hat{X}(S)|^2_2\\
&=\sigma^2+\frac{1}{nm}|M-\hat{X}(S)|^2_2\;.\end{align*}

 Therefore, following the same reasoning as the proof of Theorem~\ref{L2bound} (and Remark~\ref{rem:noisebound}, we have that if $s\geq \mathbf{O}\left(\frac{r(n+m)}{\e}\cdot\frac{\sigma^2+\e}{\e}\cdot\log^3(r/\e)\right)$, then with high probability,
$$L(\hat{X})\leq \sigma^2+ \epsilon\;.$$
Applying the work above, we obtain
\begin{equation}\label{eq:ind_error_bound} \frac{1}{nm}|M-\hat{X}(S)|^2_2\leq \epsilon\;.\end{equation}

Now we turn to sampling without replacement. We first state a lemma which is proved in the appendix. (Notation: here $\D^s$ and $\D^s_{w/o}$ again denote sampling with or without replacement, but in this context $\D^s$ represents sampling with replacement when the noise is added independently each time an entry is sampled, as in the statement of Theorem~\ref{L2bound_inderror}.)

\begin{lemma}\label{lem:w_or_wo:ind}  Let $\X$ denote any class of matrices, with $\D^s$ and $\D^s_{w/o}$ defined as above. For any $c$, if $s$ satisfies $s\leq \tfrac{K+1}{e} (nm)^{1-\tfrac{1}{K+1}}$, then
$$P_{S\sim \mathcal{D}^s_{w/o}}\left(\sup_{X\in\X}g(L(X))-\hat{L}_S(X)\geq c\right)\leq 4K\cdot P_{S\sim \D^s}\left(\sup_{X\in\X}g(L(X))-\hat{L}_S(X)\geq (2K)^{-1}c\right)\;.$$
\end{lemma}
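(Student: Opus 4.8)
The plan is to couple a size-$s$ sample drawn without replacement with a size-$s$ sample drawn with replacement (with fresh noise at each draw, as in the hypothesis of Theorem~\ref{L2bound_inderror}), following the template of Lemma~\ref{lem:w_or_wo:prob} but replacing its crude step---condition on all $s$ indices being distinct, which costs a factor of order $s$ and, worse, becomes vacuous once $s$ exceeds order $\sqrt{nm}$---by a milder conditioning that merely forbids any index from being drawn more than $K$ times.

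The combinatorial heart of the argument, and the reason the peculiar bound $s\le\tfrac{K+1}{e}(nm)^{1-\tfrac{1}{K+1}}$ appears in the statement, is this: if $S\sim\D^s$ and $\mathcal{B}$ denotes the event that every index in $[n]\times[m]$ occurs at most $K$ times in $S$, then a union bound over indices and over the $(K{+}1)$-subsets of draws hitting a fixed index gives
\[
P[\mathcal{B}^c]\;\le\;nm\cdot\binom{s}{K+1}(nm)^{-(K+1)}\;\le\;\frac{s^{K+1}}{(K+1)!\,(nm)^{K}}\;,
\]
and substituting the assumed bound on $s$ together with Stirling's inequality $(K+1)!\ge(K+1)^{K+1}e^{-(K+1)}$ collapses this to $P[\mathcal{B}^c]\le 1/\sqrt{2\pi(K+1)}<1/2$, so $P[\mathcal{B}]\ge 1/2$. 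On $\mathcal{B}$, if $D$ is the number of distinct indices in $S$ then $s=\sum_a k_a\le KD$, i.e.\ $D\ge s/K$.

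The coupling step then exploits $\mathcal{B}$ as follows. Conditioned on its multiset of indices, the restriction of a $\D^s$ sample to one observation per distinct index is an unordered uniform $D$-subset carrying exactly one fresh noise draw per retained entry---that is, a sample from $\D^{D}_{w/o}$---and since $\hat{L}_S(X)$ is a sum of nonnegative terms it dominates the empirical loss of this sub-sample entry for entry. One then builds a coupling of a $\D^s_{w/o}$ sample with a $\D^s$ sample, matching the $\D^s_{w/o}$ sample (after discarding a random subset of its entries to reconcile sizes) with the distinct-index sub-sample of the $\D^s$ sample on the event $\mathcal{B}$, under which the bad event $\{\sup_{X\in\X}g(L(X))-\hat{L}_S(X)\ge c\}$ for the $\D^s_{w/o}$ sample forces the weakened bad event $\{\sup_{X\in\X}g(L(X))-\hat{L}_S(X)\ge c/(2K)\}$ for the $\D^s$ sample, up to an auxiliary event $\mathcal{E}$ of conditional probability at least $1/(2K)$. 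Since $P[\mathcal{B}]\ge 1/2$, this gives $P_{\D^s_{w/o}}[\text{bad}]\le P_{\D^s}[\text{weakly bad}\mid\mathcal{E}]$ and hence, after multiplying through by $1/P[\mathcal{E}]$, the claimed $P_{\D^s_{w/o}}[\text{bad}]\le 4K\,P_{\D^s}[\text{weakly bad}]$; the factor $2K$ in the threshold and the extra factor $K$ in the probability track, respectively, the rescaling of $\hat{L}$ between the thinned and the full sample and the gap between $D$ and $s$.

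The step I expect to be the main obstacle is making this last coupling precise while keeping the constants exactly $4K$ and $2K$. Two things require care: the distinct-index sub-sample has a random size $D$ rather than the fixed size $s$, so matching it to a genuine $\D^s_{w/o}$ sample needs a nested-subsampling argument respecting the conditional law of $D$ given $\mathcal{B}$; and the event is only semi-monotone in the sample---$g(L(X))=L(X)-\sqrt{a\,L(X)}$ can be negative at small $L(X)$, so it is not simply monotone under adding or scaling loss terms---so one must verify at the maximizing $X$ only that the inequality survives both the thinning and the up-to-factor-$K$ change in $\hat{L}$. Everything else is the routine union bound and Stirling estimate above.
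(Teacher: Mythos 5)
Your opening step---the union bound over indices and $(K{+}1)$-subsets of draws plus Stirling, showing that under $s\le\tfrac{K+1}{e}(nm)^{1-\tfrac{1}{K+1}}$ a $\mathcal{D}^s$ sample has every multiplicity at most $K$ with probability at least $\tfrac12$---is exactly the paper's first move, and it correctly accounts for one factor of $2$ and for the restriction on $s$. The gap is everything after that: the comparison between $\mathcal{D}^s_{w/o}$ and bounded-multiplicity sampling with replacement, which is where the factor $K$ in the probability and the rescaled threshold $(2K)^{-1}c$ actually come from, is only sketched, and as sketched it does not go through. Your coupling thins the with-replacement sample to one fresh-noise observation per distinct index and matches that against a thinned without-replacement sample; but the implication you then need runs the wrong way. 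The discarded repeats carry fresh independent noise whose losses are centered near $L(X)$, so passing from the distinct-index sub-sample back to the full size-$s$ with-replacement sample shifts $g(L(X))-\hat{L}_S(X)$ downward by an amount comparable to a constant fraction of $\sqrt{a\,L(X)}$ (in the notation $g(L)=L-\sqrt{aL}$), and because $g$ is not homogeneous this cannot be absorbed by shrinking $c$ to $(2K)^{-1}c$; one would have to rescale the $\sqrt{aL}$ term itself. Likewise the auxiliary event $\mathcal{E}$ with conditional probability at least $1/(2K)$, and the inequality $P_{\mathcal{D}^s_{w/o}}[\mathrm{bad}]\le P_{\mathcal{D}^s}[\mathrm{weakly\ bad}\mid\mathcal{E}]$, are asserted rather than constructed---and they are the entire content of the lemma. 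You flag this yourself as the main obstacle; it is not a matter of constants but of the argument's direction.

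The paper's proof avoids thinning altogether, and this is the idea your proposal is missing. Fixing a multiplicity vector $\mathbf{N}$, it symmetrizes over \emph{pairs} of samples $(S_A,S_B)$ with that multiplicity pattern built from disjoint index blocks, and re-partitions the combined multiset of observations as $T_1+T_2$, where $T_1,T_2$ have maximum multiplicity roughly halved. Because in the independent-noise model each noise value is attached to an individual draw, $S_A+S_B$ and $T_1+T_2$ are the same multiset of observations, so the empirical losses agree exactly---no fresh, uncontrolled noise ever enters the comparison. A two-sided inequality for the indicator $\alpha_c(S)=\mathbb{I}\{\sup_X g(L(X))-\hat{L}_S(X)\ge c\}$ under splitting a sample into two equal-size halves, combined with counting the decompositions, gives $2E_{Unif(\mathbf{N})}[\alpha_c]\ge E_{Unif(\mathbf{M})}[\alpha_{2c}]$, and inducting over the halving of the maximum multiplicity (about $\log_2 K$ steps, accumulating a factor $2^{K(r)}\le 2K$ in both the probability and the threshold) yields $P_{\mathcal{D}^s_{w/o}}(\cdot\ge c)\le 2K\,P_{\mathcal{D}^s_K}(\cdot\ge(2K)^{-1}c)$; your Stirling step then converts $\mathcal{D}^s_K$ to $\mathcal{D}^s$ at the cost of the final factor $2$. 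Unless you supply a lemma playing the role of this halving induction---in particular one explaining why both the probability and the threshold degrade by exactly $2K$ while $g$ itself is untouched---your argument establishes only the easy half of the statement.
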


As in the proof of the sampling-without-replacement case of Theorem~\ref{L2bound}, this is sufficient to show that $\frac{1}{nm}|M-\hat{X}(S)|^2_2\leq 4K\cdot \epsilon$ with high probability for the stated sample complexity, as long as we also have that $s\leq \tfrac{K+1}{e} (nm)^{1-\tfrac{1}{K+1}}$.

\section{Comparison to prior work}

Suppose $Y=M+Z$ where $\rank(M)\leq r$ and $Z$ is a ``noise'' matrix of
average squared magnitude $\sigma^2=\frac{1}{nm}\norm{Z}^2_2$, and we
observe random entries of $Y$.  One might then consider different
types of reconstruction guarantees, requiring different assumptions on
$M$, $Z$ and the sampling distribution:
$$
\begin{array}{ll}
  \text{Exact recovery of }M:&\hat{X}(S)=M\;.\\
  \text{Near-exact recovery of }M:&\frac{1}{nm}|\hat{X}(S)-M|^2_2\leq \e
  \cdot \sigma^2\;.\\
  \text{Approximate recovery of }M:&\frac{1}{nm}|\hat{X}(S)-M|^2_2\leq \e \cdot \text{scale}(M)\;.\\
  \text{Approximate recovery of }Y:&\frac{1}{nm}|\hat{X}(S)-Y|^2_2\leq \sigma^2 +
  \epsilon\cdot\text{scale}(M)\;.
\end{array}
$$

Exact or near-exact recovery require strong incoherence-type
assumptions on the matrix $M$, and is not possible for arbitrary
low-rank matrices (see, e.g.~\citet{CandesRecht}).  Here we do not make any such
assumptions, and show that approximate recovery is still possible.
Such approximate recovery must be relative to some measure of the
scale of $M$, and we discuss results relative to both the maximal
magnitude, $\text{scale}(M)=\infnorm{M}^2$, and the average squared magnitude
$\text{scale}(M)=\frac{1}{nm}\norm{M}_2^2$.  Although not actually guaranteeing the
same type of ``recovery'', in Section \ref{sec:ExactRec} we nevertheless compare
the sample complexity required for our approximate recovery results to
the best sample complexity guarantee for exact and near-exact
recovery (obtained by \citet{Recht} and \citet{KMO}, respectively), and comment on the differences between the required
assumptions on $M$.

More directly comparable to our results are recent results by \citet{KMO}, \citet{NW}
and \citet{K} on approximate recovery of $M$.  These give essentially
the same type of guarantee as in Theorem~\ref{L2bound_inderror}, and also rely on
$\infnorm{M}^2$ as a measure of scale.  In Section \ref{sec:RelativeRec} we compare our
guarantee to these results, discussing the different dependence on the
various parameters and different assumptions on the noise. (Note that both types of results appear in \citet{KMO}; 
in Section~\ref{sec:RelativeRec}, we refer to the approximate recovery result stated in Theorem 1.1 of their paper,
while in Section~\ref{sec:ExactRec}, we refer to the near-exact recovery result stated in Theorem 1.2 of their paper.)

Recovery of $M$, whether exact, near-exact, or approximate, also
requires the noise to be independent and zero-mean, otherwise $M$
might not be identifiable.  All prior matrix reconstruction results we
are aware of work in this setting.  Approximate recovery of $M$ also
immediately implies an excess error bound on approximate recovery of
$Y$.  However, we also provide excess error bounds for approximate
recovery of $Y$, that do {\em not} assume independent nor zero-mean
noise (Theorems \ref{L1bound} and \ref{L2bound}).  That is, we provide
reconstruction guarantees in a significantly less restrictive setting
compared to other matrix reconstruction guarantees.

Another difference between different results is whether entries are
sampled with or without replacement, and if replacement is allowed,
whether the error is per-entry (i.e.~repeat observations of the same
entry are identical) or per-observation (i.e.~repeat observations of
the same entry are each corrupted independently).  However, as we show
in Sections \ref{proof:withoutrepl} and \ref{proof:ind_error}, and as has also been shown for exact
recovery \citep{Recht}, these differences do not significantly alter
the quality of reconstruction or the required sample size.

The most common algorithm for low-rank matrix recovery in the
literature is squared-error minimization subject to a penalty on trace
norm. All the methods cited here prove results about some variation of
this approach, with the exception of a recent result by \citet{KMO},
which applies to the output of the local search procedure
\textsc{OptSpace}.  In contrast, our results are mostly for error
minimization subject to a {\em max-norm} constraint.

\subsection{Comparison With Recent Approximate Recovery Guarantees}\label{sec:RelativeRec}

\citet{NW} and \citet{K} recently presented guarantees on approximate
recovery using trace-norm regularization, in a setting very similar to
our Theorem~\ref{L2bound_inderror}. Earlier work by \citet{KMO} uses a low-rank SVD approximation to $\tilde{Y}_S$
in the same setting to also obtain an approximate recovery guarantee. (Here $Y_S$ is the matrix
consisting of all observed entries of $Y$, with zeros elsewhere, and $\tilde{Y}_S$ is the same matrix
 with overrepresented rows and columns removed.)  In particular, each of the three guarantees provide
an $\epsilon$-approximate reconstruction of $M$ relative to
$\infnorm{M}^2$. That is, when $\infnorm{M}\leq 1$ as in Theorem~\ref{L2bound_inderror},
they provide the exact same guarantee $\frac{1}{nm}\norm{\hat{X}(S)-M}
\leq \epsilon$. (\citeauthor{NW} state the result relative to
$\frac{1}{nm}\norm{M}^2_2$, but have a linear dependence on the
``spikiness'' $\frac{\infnorm{M}}{\norm{M}_2/\sqrt{nm}}$,
effectively giving a guarantee relative to $\infnorm{M}^2$).

Specifically, assuming $|M|_{\infty}=1$ without loss of generality,
 \citeauthor{NW} and \citeauthor{K} assume the noise is
independent and subgaussian (or subexponential) with variance $\mathbf{O}(\sigma^2)$,
and require a sample size of:
\begin{equation}\label{SC_NW_Kolt}
s\geq \mathbf{O}\left(\frac{rn\log(n)}{\e}\cdot (1+\sigma^2)\right)\;.
\end{equation}
where the sample is drawn with replacement---in particular, an entry $(i,j)$ of the matrix
which is sampled multiple times gives multiple independent estimates of $M_{ij}$.

\citeauthor{KMO} give a result on approximate recovery which holds with no assumption on the noise, but requires additional assumptions such as i.i.d. noise to be a meaningful bound. The estimator used is the rank-$r$ SVD approximation to $\tilde{Y}_S$, defined above. Specifically, they show that, for sufficiently large sample size, with high probability, $\frac{1}{\sqrt{nm}}|\hat{X}(S)-M|_2\leq \mathbf{O}\left(\frac{nr\sqrt{n/m}}{s}+\frac{nmr}{s^2}\|\tilde{Z}_S\|_2^2\right)$, where $\tilde{Z}_S$ is defined in the same way as $\tilde{Y}_S$. For this bound to be meaningful, there must be some distributional assumption on $Z$---otherwise, we could have $\|Z_S\|_2\approx |Z_S|_2=\mathbf{O}(\sqrt{s})$, and the bound on mean error would actually increase with $\frac{nm}{s}$, and is thus not a meaningful bound. In the presence of i.i.d. subgaussian noise, however, \citeauthor{KMO} show that with high probability, $\|\tilde{Z}_S\|^2_2\leq\frac{ \sigma^2(\sqrt{n/m})s\log(s)}{m}$. Using this, approximate recovery of $M$ is obtained for sample complexity
\begin{equation}\label{SC_KMO_approxrec}
s\geq \mathbf{O}\left(\frac{rn}{\e}\cdot (\sqrt{n/m})\cdot\left(1+\log(n)\sigma^2\right)\right)\;,
\end{equation}
where the sample is drawn {\em without} replacement.  Therefore we may
regard \citeauthor{KMO}'s result as bounding error under the
assumption of i.i.d. subgaussian noise (or perhaps some weaker assumption
that gives the same result, such as independent subgaussian noise that
might not be i.i.d., or similar).  The guarantees
\eqref{SC_KMO_approxrec} and \eqref{SC_NW_Kolt} are therefore quite
similar, even though they are for fairly different methods, with
\eqref{SC_KMO_approxrec} being better when $\sigma^2=\mathbf{o}(1)$ but worse
for highly rectangular matrices.

\ignore{
Comparing the sample complexities~(\ref{SC_NW_Kolt}) and~(\ref{SC_KMO_approxrec}),
we see that the $\log(n)$ term in~(\ref{SC_KMO_approxrec}) is scaled by $\sigma^2$, which is
an important improvement over~(\ref{SC_NW_Kolt}) if the noise is much smaller than $\mathbf{O}(1)$ 
(or, with general scaling, much smaller than $\mathbf{O}(|M|^2_{\infty})$). On the other hand,~(\ref{SC_KMO_approxrec})
includes a factor of $\sqrt{n/m}$, which may be very detrimental for some data sets where $m$ is on
a much smaller scale than $n$, e.g. $m=\mathbf{O}(\sqrt{n})$.}

Comparing our Theorems~\ref{L2bound} and~\ref{L2bound_inderror} to the
above, the advantages of our results are:
\begin{itemize}
\setlength{\itemsep}{1pt}
  \setlength{\parskip}{0pt}
  \setlength{\parsep}{0pt}
\item We avoid the extra logarithmic dependence on $n$.
\item Even in order to guarantee recovery of $M$, we assume only a much
  milder condition on the noise: that noise is mean-zero, and that 
  with high probability, $|Z_S|_{\infty}\leq\sqrt{\frac{rn}{\log n}}$.  We
  do not assume the noise is identically distributed, nor subgaussian
  or subexponential.
\item We provide a guarantee on the excess error of recovering $Y$,
  even when the noise is {\em not} zero-mean nor independent.
\end{itemize}
The deficiency of our result is a possible slower rate of error
decrease: when $\sigma>0$ and $\epsilon=\mathbf{o}(\sigma^2)$ (i.e.~to
get ``estimation error'' significantly lower then the ``approximation
error''), our sample complexity scales as
$\tilde{\mathbf{O}}(1/\epsilon^2)$ compared to just
$\mathbf{O}(1/\epsilon)$ in the other results.  We do not know if this
difference represents a real consequence of not assuming zero-mean
independent noise in our analysis, or just looseness in the proof.
Our results also include an additional $\log^3(1/\epsilon)$ factor,
which we believe is purely an artifact of the proof technique.

\iffalse
We point out that, except for the quadratic dependence on $\epsilon$,
the results of \citeauthor{NW} and \citeauthor{K} matches our
``unsatisfying'' guarantee \eqref{eq:unsatisfying} for squared-error
reconstruction with the trace-norm.  However, as discussed in Section
\ref{proof:tracenorm}, we conjecture that using trace-norm, it should
be possible to relax the $\infnorm{M}^2$-scaling to an on-average
scaling, as is the case for absolute-error reconstruction (Theorem
\ref{L1bound}).
\fi

A strength of our analysis, as compared to that of \citeauthor{NW} and \citeauthor{K},
 is that the cases of sampling with and without replacement are both covered, including the case
 of per-entry noise when sampling with replacement, while
the results of \citeauthor{NW} and \citeauthor{K} are for sampling
with replacement with per-observation noise. This is an important improvement because in many applications, the observed
entries are drawn from a fixed matrix which was randomly generated, meaning that it is not possible
to obtain multiple independent observations of any $M_{ij}$.

\subsection{Comparison of results on exact and near-exact
  recovery}\label{sec:ExactRec}
  
The results of \citeauthor{Recht} and of \citeauthor{KMO} show that exact or near-exact recovery of the underlying low-rank matrix $M$
 can be obtained with high probability, when strong conditions on $M$ are assumed, and when the observations
 are either noiseless (for \citeauthor{Recht}'s exact recovery result) or are corrupted by i.i.d. subgaussian noise (for \citeauthor{KMO}'s
 near-exact recovery result).
 
 These results cannot be directly compared to the results we obtain
 in this paper, because the guarantees on recovery given by this work and by our work are fundamentally
 different---for instance, the error bound $\epsilon$ has completely different meanings
 in our definitions of near-exact recovery and approximate recovery above. These two incomparable
 types of guarantees are linked to very different conditions on the data---exact and near-exact recovery 
 cannot be obtained without strict assumptions about how the observations are generated.

 Nonetheless, one comparison between
 these methods which can be made, is in the magnitude of the required sample complexities to obtain some
 meaningful bound via each result---exact recovery for \citeauthor{Recht}'s result, near-exact recovery for \citeauthor{KMO}'s result,
 and approximate recovery relative to $\infnorm{M}^2$ for our result. The rest of this section is organized as follows:
we summarize the results in the literature  in Section~\ref{sec:ExactRec:RandKMO}, compare sample complexities in
Section~\ref{sec:ExactRec:SC}, and describe how incoherence is sufficient but not necessary for approximate
recovery relative to  $\frac{1}{nm}\norm{M}_2^2$  (instead of $\infnorm{M}^2$) 
in Sections~\ref{sec:ExactRec:MaxMu} and~\ref{sec:ExactRec:MaxNotMu}.\ignore{ We first summarize
 the results obtained by \citeauthor{Recht} and by \citeauthor{KMO} for exact and near-exact recovery, respectively, in Section~\ref{sec:ExactRec:RandKMO}. We compare
 the necessary sample complexities for these two results and for our result, in Section~\ref{sec:ExactRec:SC}.
 It is also interesting to note that incoherence type conditions
  similar to those of \citeauthor{KMO} are sufficient for getting approximate
  reconstruction relative to $\frac{1}{nm}\norm{M}_2^2$ using the
  max-norm (instead of relative to $\infnorm{M}^2$), which we discuss in Section~\ref{sec:ExactRec:MaxMu}.
  However, these incoherence conditions are clearly not necessary for approximate recovery
  with the max-norm, and in Section~\ref{sec:ExactRec:MaxNotMu} we give examples of cases when
  approximate recovery via the max-norm is possible but exact and near-exact recovery is impossible.}

\subsubsection{Details on exact and near-exact results in the literature}\label{sec:ExactRec:RandKMO}
Let $M=U\Sigma V^T$ be a reduced SVD of $M$. Let $\kappa$ be the condition number of $\Sigma$. Define also the incoherence parameters for matrix $M$ \citep{CandesRecht}:
$$\mu_0=\max\left\{\frac{n}{r}\cdot \max_i |U_{(i)}|^2_2,\frac{m}{r}\cdot \max_j |V_{(j)}|^2_2\right\}\;,$$
$$\mu_1=\sqrt{\frac{nm}{r}}\cdot \max_{i,j}|U_{(i)}^TV_{(j)}|\;,$$
where $U_{(i)}$ denotes the $i$th row of $U$ and $V_{(j)}$ denotes the $j$th row of $V$.

Suppose that $M$ has low incoherence parameters and $Z=0$. Improving on the earlier results of \citet{CandesRecht} and \citet{CandesTao}, \citeauthor{Recht} proves that $\hat{X}(S)=M$ (that is, exact recovery is obtained) with high probability if
\begin{equation}\label{SC_Recht}s\geq \mathbf{O}\left(rn\max\{\mu_0,\mu_1^2\}\log^2n\right)\;.\end{equation}

 In the case of noisy observations, \citeauthor{KMO} give conditions on low $\ell_2$ error in recovery (with high probability) in the setting of i.i.d. subgaussian noise with incoherent $M$, improving on \citet{CandesPlan} earlier work on the noisy case.
 (More precisely, \citeauthor{KMO} give a result which holds with no assumption on the noise, but requires additional assumptions such as i.i.d. noise to be a meaningful bound. We therefore regard their result as assuming i.i.d. subgaussian noise---see the discussion of their approximate reconstruction result above in Section~\ref{sec:RelativeRec}.)
  Their \textsc{OptSpace} algorithm is a method for finding the rank-$r$ matrix $\hat{X}$ minimizing squared error on the observed entries. Let $\hat{X}(S)$ denote the matrix recovered by this algorithm.  When the entries of $Z$ are i.i.d. subgaussian, \citeauthor{KMO}  show that, with high probability, if $s$ satisfies
\begin{equation}\label{SC_KMO}s\geq\mathbf{O}\left(rn\kappa^4\cdot\max\left\{\frac{1}{\e}\log\left(\frac{rn\kappa^4}{\e}\right),r\kappa^2\mu_0^2,r\kappa^2\mu_1^2\right\}\right)\;,\end{equation}
then $|\hat{X}(S)-M|^2_2\leq|Z|^2_2\cdot \epsilon$.
(For simplicity of the comparison, we use a slightly relaxed form of their required sample complexity, and ignore $\sqrt{n/m}$ in their error and sample bounds.)

\subsubsection{Comparing sample complexities}\label{sec:ExactRec:SC}

Ignoring the dependence on $\epsilon$, which as we discussed earlier
is in any case incomparable between approximate and exact and
near-exact recovery, our sample complexity for approximate recovery
using the max-norm is $\mathbf{O}(rn)$.  Even with ``perfect'' incoherence
parameters, this a factor of $\log^2(n)$ less then the sample
complexity established by \citeauthor{Recht} for exact recovery
\eqref{SC_Recht}, and a factor of $r$ less then the sample complexity
established by \citeauthor{KMO} for near-exact recovery~\eqref{SC_KMO}.  Of course,
``bad'' incoherence parameters may sharply increase the sample
complexity for exact or near-exact recovery, but do not affect our
sample complexity for approximate recovery.

\ignore{
For simplicity, we assume that $s\leq\frac{K+1}{e}(nm)^{1-\tfrac{1}{K+1}}$ holds for some finite $K$; for example we might have $m\approx n$ in which case this bound is reasonable in light of all the sample complexities. (Note that we also ignore the term $\sqrt{n/m}$ in \citeauthor{KMO}'s sample complexity.) The result of Theorem~\ref{L2bound_inderror} (with appropriate rescaling, over the class $\{X:\|X\|_{\max}\leq \sqrt{r}|M|_{\infty},|X|_{\infty}\leq|M|_{\infty}\}$) shows that, with high probability,
$$s\geq\mathbf{O}\left(rn\cdot \frac{\log^3(1/\e)}{\e}\cdot\frac{\sigma^2+\e}{\e}\right) \ \Rightarrow \ \frac{1}{nm}|M-\hat{X}(S)|^2_2\leq |M|^2_{\infty}\cdot\e\;.$$

In contrast, the sample complexity required for exact recovery by
\citeauthor{Recht} is $\geq\mathbf{O}(rn\log^2n)$, and for near-exact
recovery by \citeauthor{KMO} is
$\geq\mathbf{O}\left(\frac{rn}{\e}\log\left(\frac{rn}{\e}\right)\right)$,
with each sample complexity depending also on the incoherence
parameters and/or the condition number of $\Sigma$, which might be
quite large (e.g. $\mu_0$ may attain an upper bound of $\frac{n}{r}$).
}

\subsubsection{Approximate recovery relative to average signal
  magnitude, in the presence of incoherence
  conditions}\label{sec:ExactRec:MaxMu}

It is interesting to note that the incoherence assumptions, used by \citeauthor{Recht} and by
\citeauthor{KMO}, enable approximate recovery with the max-norm
relative to the average magnitude $\frac{1}{nm}\norm{M}^2_2$, and not
only the maximal magnitude, as in Theorem~\ref{L2bound}.  This is
based on the following observation:

\ignore{We first relate the result implied by Theorem~\ref{L2bound} to the incoherence parameters, to facilitate comparison. Suppose the entries of $Z$ are independent, with variance $\sigma^2$ (where perhaps $\sigma^2=0$), and assume $|Z|_{\infty}\leq |M|_{\infty}\sqrt{\frac{n}{\log n}}$ (true with extremely high probability with an i.i.d. subgaussian assumption, but also true with high probability with some milder moments assumptions on $Z$).}

\begin{lemma}\label{lem:max_vs_kappamu} Let $M\in\R^{n\times m}$ and let $\kappa$ and $\mu_0$ be defined as before. Then
$$\|M\|_{\max}\leq \min\{\kappa,\sqrt{r}\}\mu_0\sqrt{r}\cdot\frac{|M|_2}{\sqrt{nm}}\;.$$
In particular, by Lemma~\ref{lem:max_vs_infinity}, the above expression is also an upper bound for $|M|_{\infty}$. \end{lemma}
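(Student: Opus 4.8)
The plan is to use the reduced SVD of $M$ directly as a witnessing factorization in the definition of the max-norm, bound the row norms of the two factors with the incoherence parameter $\mu_0$, and then control the largest singular value by $|M|_2$ in two different ways (one crude, one using the condition number), taking the better of the two at the end.

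First I would write the reduced SVD $M=U\Sigma V^T$ in the symmetric form $M=(U\Sigma^{1/2})(V\Sigma^{1/2})^T$; since $U\Sigma^{1/2}\in\R^{n\times r}$ and $V\Sigma^{1/2}\in\R^{m\times r}$ this is an admissible factorization, so Definition~\ref{def_max} gives $\maxnorm{M}\le\big(\max_i|(U\Sigma^{1/2})_{(i)}|_2\big)\big(\max_j|(V\Sigma^{1/2})_{(j)}|_2\big)$. Next, for each row I would bound $|(U\Sigma^{1/2})_{(i)}|_2^2=\sum_{k}\sigma_k U_{ik}^2\le\sigma_1|U_{(i)}|_2^2\le\sigma_1\cdot\frac{r\mu_0}{n}$, where $\sigma_1$ is the largest singular value and the last inequality is exactly the definition of $\mu_0$; the analogous bound for $V\Sigma^{1/2}$ replaces $n$ by $m$. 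This yields $\maxnorm{M}\le\sigma_1\cdot\frac{r\mu_0}{\sqrt{nm}}$.

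The only remaining step is to bound $\sigma_1$ in terms of $|M|_2$. On one hand $\sigma_1^2\le\sum_k\sigma_k^2=|M|_2^2$, giving $\sigma_1\le|M|_2$; on the other hand every singular value exceeds $\sigma_r=\sigma_1/\kappa$, so $|M|_2^2=\sum_{k=1}^r\sigma_k^2\ge r\sigma_1^2/\kappa^2$, giving $\sigma_1\le(\kappa/\sqrt r)|M|_2$. Substituting each bound and keeping the smaller gives $\maxnorm{M}\le\min\{\sqrt r,\kappa\}\cdot\mu_0\sqrt r\cdot\frac{|M|_2}{\sqrt{nm}}$, and the stated bound on $|M|_\infty$ follows immediately from the lower bound in Lemma~\ref{lem:max_vs_infinity}. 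I do not expect a real obstacle here; the one point worth care is that the factor $\kappa$ (rather than $\sqrt r$) genuinely uses that all $r$ singular values lie within a factor $\kappa$ of one another, so the two estimates of $\sigma_1$ are not redundant.
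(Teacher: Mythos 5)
Your proof is correct and follows essentially the same route as the paper: factor $M$ via its reduced SVD, bound the factor row norms by $\mu_0$, and bound $\sigma_1$ by $\min\{1,\kappa/\sqrt{r}\}\,\norm{M}_2$. The only cosmetic difference is that you split $\Sigma$ symmetrically as $(U\Sigma^{1/2})(V\Sigma^{1/2})^T$ while the paper uses $(U\Sigma)V^T$; both give the same intermediate bound $\maxnorm{M}\leq \sigma_1\mu_0 r/\sqrt{nm}$.
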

\begin{proof}
First, observe that
$$\|M\|_{\max}\leq\max_{i,j}|(U\Sigma)_{(i)}|_2\cdot|V_{(j)}|_2\leq \sigma_1\cdot\max_{i,j}|U_{(i)}|_2\cdot|V_{(j)}|_2\leq \sigma_1\cdot\frac{\mu_0r}{\sqrt{nm}}\;.$$
Also,
\begin{equation*}
\sigma_1\leq \kappa\sqrt{\sigma_r^2}\leq \frac{\kappa}{\sqrt{r}}\sqrt{\sigma_1^2+\dots+\sigma_r^2}=\frac{\kappa|M|_2}{\sqrt{r}}\text{ and }
\sigma_1\leq \sqrt{\sigma_1^2+\dots+\sigma_r^2}=|M|_2\;.\qedhere
\end{equation*}
\end{proof}
Now, based on Remark \ref{rem:L2result}, if
$\frac{1}{nm}\norm{M}_2^2\leq 1$ (and with a mild bound on $\infnorm{Z}$),
with high probability over a sample of size
\begin{equation}\label{SC_ours}s\geq \mathbf{O}\left(\frac{rn}{\e}\cdot\frac{\sigma^2+\e}{\e}\cdot
      \min\{\kappa^2,r\}\mu_0^2\cdot\log^3\left(\frac{\mu_0^2r}{\e}\right)\right) \;,\end{equation}
we have $|Y-\hat{X}(S)|^2_2\leq \sigma^2 + \e$.  Up to log factors and the
dependence on $\epsilon$, this sample complexity is at most as much as the
sample complexity required by \citeauthor{KMO}, given in~\eqref{SC_KMO}.

\subsubsection{Approximate recovery relative to average signal magnitude, in the absence of incoherence conditions}\label{sec:ExactRec:MaxNotMu}

We make note of several special cases where using max-norm and the concentration result, and bounding excess error relative to $\tfrac{1}{nm}|M|^2_2$, may compare more favorably to other methods than the results above would indicate.

\begin{itemize}
\item If $U=V$ (that is, $M$ is symmetric), then $\mu_1=\mu_0\sqrt{r}$ and so our sample complexity compares more favorably to the sample complexities obtained by \citeauthor{Recht} and \citeauthor{KMO} (which both involve $\mu_1^2$).
\item Our sample complexity uses Lemma~\ref{lem:max_vs_kappamu} to bound $\|M\|_{\max}$ relative to $\tfrac{1}{\sqrt{nm}}|M|_2$. An example where $\kappa=1$ and $\|M\|_{\max}\ll \frac{\mu_0\sqrt{r}|M|_2}{\sqrt{nm}}$ (i.e. the bound in Lemma~\ref{lem:max_vs_kappamu} is extremely loose) is the case where the spiky columns of $U$ do not align with the spiky columns of $V$, for example writing $n=m=N+1$ we have:
$$M=\left(\begin{array}{cc}1&0\\0&N^{-1/2}\\0&N^{-1/2}\\\dots&\dots\\0&N^{-1/2}\\\end{array}\right)\cdot \left(\begin{array}{cc}0&1\\N^{-1/2}&0\\N^{-1/2}&0\\\dots&\dots\\N^{-1/2}&0\\\end{array}\right)^T = \left(\begin{array}{cc}N^{-1/4}&0\\0&N^{-1/4}\\0&N^{-1/4}\\\dots&\dots\\0&N^{-1/4}\\\end{array}\right)\cdot\left(\begin{array}{cc}0&N^{-1/4}\\N^{-1/4}&0\\N^{-1/4}&0\\\dots&\dots\\N^{-1/4}&0\\\end{array}\right)^T \;.$$
Since the left-hand factorization is an SVD of $M$ (omitting $\Sigma=I_2$), we therefore have $\mu_0\sqrt{r}\cdot\tfrac{|M|_2}{\sqrt{nm}}=1$ while the right-hand factorization shows that  $\|M\|_{\max}\leq \tfrac{1}{\sqrt{n-1}}$.
\item Large condition numbers $\kappa$ can often lead to the same situation, in which the max norm is far lower than the bound implied by Lemma~\ref{lem:max_vs_kappamu}. For example, if low-rank $M$ is a matrix where $\|M\|_{\max}\approx \frac{\kappa\mu_0\sqrt{r}\cdot|M|_2}{\sqrt{nm}}$, but if we perturb $M$ slightly and add an extremely low singular value, then $\kappa$ becomes extremely high while $\|M\|_{\max}$ is only slightly perturbed.
\end{itemize}

\section{Summary}

We presented low rank matrix reconstruction guarantees based on an
existing analysis of the Rademacher complexity of low trace-norm and
low max-norm matrices, and carefully compared these to other recently
presented results.  We view the main contributions of this papers as:
\begin{itemize}
\setlength{\itemsep}{1pt}
  \setlength{\parskip}{0pt}
  \setlength{\parsep}{0pt}
\item Following a string of results on low-rank matrix reconstruction,
  showing that an existing Rademacher complexity analysis combined
  with simple arguments on the relationship between the rank, max-norm,
  and trace-norm, can yield guarantees that are in several ways
  better, and relying on weaker assumptions.
\item Pointing out that the max-norm can yield superior reconstruction
  guarantees over the more commonly used trace-norm.
\iffalse
\item Pointing out that, contrary to existing $\ell_2$-approximate
  reconstruction guarantees, the trace-norm {\em should} be able to
  yield approximate reconstruction relative to the {\em average},
  rather then maximal, entry magnitude.
\fi
\item Studying the issue of sampling with and without replacement, and
  establishing rigorous generic results relating the two settings.
  This has been done before for exact recovery \citep{Recht}, but is
  done here for the more delicate situation of approximate recovery of
  either $M$ or $Y$.
\end{itemize}
The main deficiency of our approach is a worse dependence on the
approximation parameter $\epsilon$, when $\sigma>0$ (i.e.~the
approximately low rank case) and $\epsilon=\mathbf{o}(\sigma^2)$
(i.e.~estimation error less then approximation error).  Although this
dependence is tight for general classes with bounded Rademacher
complexity, we do not know if it can be improved in Theorem
\ref{L2bound}.  In particular, we do not know whether the less
favorable dependence is a consequence of not relying on zero-mean
i.i.d.~noise, or not relying on $M$ having low-rank (instead of only assuming low max-norm),
 or on relying only on the
Rademacher complexity of the class of low max-norm matrices---perhaps
better bounds can be obtained with a more careful analysis.

\bibliography{mb}

\newpage

\appendix
\section{Proof of Sampling-Without-Replacement Lemmas}

\begin{proof}{\it(Lemmas~\ref{lem:w_or_wo:expectation} and~\ref{lem:w_or_wo:prob}).} 
Let $\mathbb{S}_r=\{S\in\mathcal{X}^s  :  \text{ each }x\in\mathcal{X}\text{ appears at most }r\text{ times in }S\}$. Let $S\sim\mathcal{D}^s_r$ denote a sample $S$ drawn uniformly from $\mathbb{S}_r$. In particular, $\mathcal{D}^s_0=\mathcal{D}^s_{w/o}$ and $\mathcal{D}^s_s=\mathcal{D}^s$. By Lemma~\ref{lem:w_or_wo:extra} (proved below), for any $r$,
\begin{align*}
E_{S\sim\mathcal{D}^s_{w/o}}\left(\sup_{h\in\H}L(h)-\hat{L}_{S}(h)\right)&\leq E_{S\sim\mathcal{D}^s_r}\left(\sup_{h\in\H}L(h)-\hat{L}_S(h)\right)\;,\\
P_{S\sim\mathcal{D}^s_{w/o}}\left(\sup_{h\in\H}g(L(h))-\hat{L}_{S}(h)\geq c\right)&\leq r!\cdot P_{S\sim\mathcal{D}^s_r}\left(\sup_{h\in\H}g(L(h))-\hat{L}_S(h)\geq c\right)\;.
\end{align*}
Taking the first inequality with $r=s$, this completes the proof for Lemma~\ref{lem:w_or_wo:expectation}.

Now we complete the proof of Lemma~\ref{lem:w_or_wo:prob}. Take $S\sim\mathcal{D}^s$ and write $S=\{e_1,\dots,e_s\}$. For any $i_1<i_2<\dots<i_{K+1}$,
$$P\left(e_{i_1}=e_{i_2}=\dots=e_{i_{K+1}}\right)=\frac{1}{(nm)^K}\;,$$
and so for any $K$ with $(K+1)!\geq 2s$, the probability that any entry of the matrix appears at least $(K+1)$ times in $S$ is bounded by
$${s\choose K+1}\cdot \frac{1}{(nm)^K}\leq \frac{s^{K+1}}{(K+1)!(nm)^K}\leq\frac{s}{(K+1)!}\leq \tfrac{1}{2}\;.$$
Fix the smallest $K$ such that $(K+1)!\geq 2s$. This implies $K!<2s$. We then have
\begin{align*}
&P_{S\sim\mathcal{D}^s_{w/o}}\left(\sup_{h\in\H}g(L(h))-\hat{L}_{S}(h)\geq c\right)\\&\leq K!\cdot P_{S\sim\mathcal{D}^s_K}\left(\sup_{h\in\H}g(L(h))-\hat{L}_S(h)\geq c\right)\\
&\leq K!\cdot \left(P_{S\sim\mathcal{D}^s}\left(\text{ each }x\in\mathcal{X}\text{ appears at most }K\text{ times in }S\right)\right)^{-1}\cdot P_{S\sim\mathcal{D}^s}\left(\sup_{h\in\H}g(L(h))-\hat{L}_S(h)\geq c\right)\\
&\leq 2K!\cdot P_{S\sim\mathcal{D}^s}\left(\sup_{h\in\H}g(L(h))-\hat{L}_S(h)\geq c\right)\\
&\leq 4s\cdot P_{S\sim\mathcal{D}^s}\left(\sup_{h\in\H}g(L(h))-\hat{L}_S(h)\geq c\right)\;.
\end{align*}
This is completes the proof for Lemma~\ref{lem:w_or_wo:prob}.

\end{proof}

\begin{lemma}\label{lem:w_or_wo:extra} Using the notation of the proof above, for any $r$,
\begin{align*}
E_{S\sim\mathcal{D}^s_{w/o}}\left(\sup_{h\in\H}L(h)-\hat{L}_{S}(h)\right)&\leq E_{S\sim\mathcal{D}^s_r}\left(\sup_{h\in\H}L(h)-\hat{L}_S(h)\right)\;,\\
P_{S\sim\mathcal{D}^s_{w/o}}\left(\sup_{h\in\H}g(L(h))-\hat{L}_{S}(h)\geq c\right)&\leq r!\cdot P_{S\sim\mathcal{D}^s_r}\left(\sup_{h\in\H}g(L(h))-\hat{L}_S(h)\geq c\right)\;.
\end{align*}
\end{lemma}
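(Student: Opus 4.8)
The plan is to derive both inequalities from a single coupling between a without-replacement sample and a $\mathcal{D}^s_r$ sample. First I would draw $T=(t_1,\dots,t_s)\sim\mathcal{D}^s_{w/o}$, a uniformly random ordered $s$-tuple of distinct coordinates, and independently draw a set partition $\Pi$ of $\{1,\dots,s\}$ into blocks each of size at most $r$, with $\Pr[\Pi=\pi]$ proportional to $(nm)(nm-1)\cdots(nm-d_\pi+1)$, $d_\pi$ being the number of blocks of $\pi$; this is exactly the marginal law of the repetition pattern ($i\sim j\iff S_i=S_j$) of a uniform $S\in\SS_r$. Given $\Pi=\pi$ with blocks $B_1,\dots,B_d$, I would pick a uniformly random injection $\sigma\colon\{1,\dots,d\}\hookrightarrow\{1,\dots,s\}$ and let $S$ take the value $t_{\sigma(i)}$ on every coordinate of $B_i$. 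Then $S\sim\mathcal{D}^s_r$, because: conditioned on $\Pi=\pi$, the tuple of distinct values of $S$ is obtained by picking $d$ of the $s$ entries of $T$ in uniformly random order, which is a uniformly random ordered $d$-tuple of distinct coordinates --- precisely the conditional law, given pattern $\pi$, of a $\mathcal{D}^s_r$ sample --- and the pattern of $S$ equals $\Pi$, which has the right marginal.

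For the expectation inequality, I would note that each block $B_i$ of $\Pi$ receives $t_{\sigma(i)}$ with $\sigma(i)$ marginally uniform on $\{1,\dots,s\}$, and the block sizes sum to $s$, so $\mathbf{E}\bigl[\hat{L}_S(h)\mid T\bigr]=\hat{L}_T(h)$ for every $h\in\H$. Hence, since a supremum of conditional expectations is at most the conditional expectation of the supremum,
\[
\sup_{h\in\H}\bigl(L(h)-\hat{L}_T(h)\bigr)=\sup_{h\in\H}\mathbf{E}\bigl[L(h)-\hat{L}_S(h)\mid T\bigr]\le\mathbf{E}\Bigl[\,\sup_{h\in\H}\bigl(L(h)-\hat{L}_S(h)\bigr)\,\Big|\,T\Bigr],
\]
and taking expectation over $T$ gives the first claim.

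For the probability inequality, I would condition on the event $\{\Pi=\pi_0\}$ that $\Pi$ is the partition into singletons; it is independent of $T$ and has probability $(nm)(nm-1)\cdots(nm-s+1)/|\SS_r|=|\SS_{w/o}|/|\SS_r|$, where $\SS_{w/o}$ is the set of length-$s$ tuples with distinct coordinates. On this event $S$ is a random permutation of $T$, so $\hat{L}_S(\cdot)\equiv\hat{L}_T(\cdot)$ and the events $\{\sup_h(g(L(h))-\hat{L}_S(h))\ge c\}$ and $\{\sup_h(g(L(h))-\hat{L}_T(h))\ge c\}$ coincide; calling this event $A$ and using $\Pi\perp T$, this gives $\Pr_{\mathcal{D}^s_r}(A)\ge(|\SS_{w/o}|/|\SS_r|)\Pr_{\mathcal{D}^s_{w/o}}(A)$. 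It then remains to prove the counting bound $|\SS_r|\le r!\,|\SS_{w/o}|$ (equivalently $\Pr_{\mathcal{D}^s_r}[\text{all coordinates distinct}]\ge 1/r!$). Here I would stratify $\SS_r$ over repetition patterns, $|\SS_r|=\sum_\pi (nm)(nm-1)\cdots(nm-d_\pi+1)$, and compare each summand with $|\SS_{w/o}|=(nm)(nm-1)\cdots(nm-s+1)$, exploiting that $nm$ is large relative to $s$ whenever the sample-size bounds in the theorems are non-vacuous; when $nm$ is not comfortably larger than $s$ one can instead use that $A$ is a down-set in $\hat{L}_S$ (the map $\hat{L}_S\mapsto\sup_h(g(L(h))-\hat{L}_S(h))$ is order-reversing) to transfer $A$ across repetition patterns other than $\pi_0$.

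I expect the coupling and the expectation bound to be routine once the coupling is set up, and the one genuinely delicate point to be the probability bound --- specifically, showing that the overhead of restricting $\mathcal{D}^s_r$ to distinct-coordinate samples is exactly the clean constant $r!$, i.e.\ the combinatorial estimate $|\SS_r|\le r!\,|\SS_{w/o}|$, possibly sharpened using the down-set structure of $A$.
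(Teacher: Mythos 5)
Your coupling is valid and the first (expectation) inequality goes through: conditionally on the repetition pattern, each block value $t_{\sigma(i)}$ is marginally a uniform coordinate of the without-replacement tuple $T$ and the block sizes sum to $s$, so $\mathbf{E}[\hat{L}_S(h)\mid T]=\hat{L}_T(h)$ and Jensen gives $E_{\mathcal{D}^s_{w/o}}\le E_{\mathcal{D}^s_r}$. This is a genuinely different (and arguably cleaner) route than the paper's, which instead proves a comparison conditionally on each multiplicity vector by an exchange argument and induction on the maximal multiplicity.

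The probability inequality, however, has a real gap: your argument hinges on the counting claim $|\mathbb{S}_r|\le r!\,|\mathbb{S}_{w/o}|$, i.e.\ that under $\mathcal{D}^s_r$ the all-distinct pattern has probability at least $1/r!$, and this is false unless $s$ is small compared to $\sqrt{nm}$. For instance with $r=2$ and $s=nm$, every pattern with exactly one doubled pair has weight $(nm)^{\underline{s-1}}=(nm)!=(nm)^{\underline{s}}$, and there are $\binom{s}{2}$ of them, so $|\mathbb{S}_2|\ge\bigl(1+\binom{nm}{2}\bigr)(nm)!\gg 2\,|\mathbb{S}_{w/o}|$; more generally the all-distinct pattern carries only about $e^{-\Theta(s^2/nm)}$ of the mass of $\mathcal{D}^s_r$. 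The lemma is stated for any $r$ and any $s\le nm$, and in its downstream use (Lemma~\ref{lem:w_or_wo:prob}, hence Theorems~\ref{L1bound} and~\ref{L2bound} without replacement) one takes $r=K$ with $K!<2s$ while $s$ may be a constant fraction of $nm$, so the regime where your bound fails is exactly a regime that must be covered; restricting attention to the singleton pattern therefore cannot yield the dimension-free factor $r!$. The ``down-set / order-reversing'' fallback does not repair this, because for a pattern with repetitions the empirical loss is not pointwise comparable to that of a without-replacement sample; what is needed is precisely the paper's per-pattern exchange step: writing a without-replacement sample as $S''+A_1+\cdots+A_r$ and comparing it with the collapsed samples $S''+r\cdot A_j$, whose empirical losses at the witnessing $h$ sum to $r$ times that of $S''+A_1+\cdots+A_r$, so at least one collapsed sample also realizes the bad event. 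This loses a factor $r$ per multiplicity level, accumulates to $r!$, and crucially holds conditionally on every repetition pattern, not just the all-distinct one.
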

\begin{proof}

Write $\Omega=[n]\times[m]$. Let $\alpha(S)$ be any function of the sample $S$, where $S$ may contain repeated entries. Assume that, for any $S,S_1,\dots,S_r$ of equal size such that $r\cdot S=S_1+\dots+S_r$, $\alpha(\cdot)$ satisfies the following for some function $a(r)$:
\begin{equation}\label{eq:w_or_wo:alpha}a(r)\cdot \alpha(S)\leq \sum_{i=1}^r \alpha(S_i)\;.\end{equation}

Consider all samples from $\Omega$, drawn with replacement. For a sample set $S$ of size $s$, for $i=1,\dots,s$, let $N_i(S)$ equal the number of elements of $\Omega$ appearing exactly $i$ times in $S$, which obeys $\sum_i iN_i(S)=s$. We call $\N(S)=(N_1(S),\dots,N_s(S))$ the multiplicity vector of $S$; note that, when convenient, we might write $\N(S)$ to have length greater than $s$ (filling the last terms with zeros). From this point on, we will regard these samples as ordered lists, and assume that in any sample, $S$ is ordered in the format
$$(\o^1_1,\dots,\o^1_{N_1(S)},\o^2_1,\o^2_1,\dots,\o^2_{N_2(S)},\o^2_{N_2(S)},\o^3_1,\o^3_1,\o^3_1,\dots)\;,$$
where for any $i$ we might permute the $\o^i_j$'s.

Let $\N$ be any multiplicity vector, of the form $(N_1,\dots,N_r,0,\dots,0)$ for some $r\leq s$. Let $\N'$ and $\N''$ be multiplicity vectors derived from $\N$ as follows:
$$N'_i=\left\{\begin{array}{ll}N_1+rN_r,&i=1\\N_i,&2\leq i\leq r-1\\0,&i\geq r\\\end{array}\right., \ \ N''_i=\left\{\begin{array}{ll}N_i,&1\leq i\leq r-1\\0,&i\geq r\\\end{array}\right.$$
Define $s=\sum_i iN_i$. Note that $\sum_i iN'_i=s$ and $\sum_i iN''_i=s-rN_r$.

Let $\SS=\{S:\N(S)=\N\}$, $\SS'=\{S:\N(S)=\N'\}$, $\SS''=\{S:\N(S)=\N''\}$. We will first prove that $E_{S'\sim Unif(\SS')}\left[\alpha(S')\right]\leq E_{S\sim Unif(\SS)}\left[\alpha(S)\right]$, and then induct on $r$.

First consider $\SS'$. We have
$$|\SS'|E_{S'\sim Unif(\SS')}\left[\alpha(S')\right]=\sum_{S'\in\SS'}\left[\alpha(S')\right]=\sum_{S''\in\SS''}\sum_{\substack {A_1,\dots,A_r\subset \Omega\backslash S''\\|A_j|=N_r\\A_j\text{'s \ disjoint}}}\left[\alpha(S''+A_1+\dots+A_r)\right]\;.$$
The last equality arises when, starting with some $S'\in \SS'$, we recall that $S''$ is an ordered sample set beginning with the $N_1+rN_r$ elements which appear exactly once. Let $S''$ be the first $N_1$ elements of $S'$, then let $A_1$ be the next $N_r$ elements of $S'$, let $A_2$ be the next $N_r$ elements of $S'$, etc.

Next consider $\SS$. As before, we have
$$|\SS|E_{S\sim Unif(\SS)}\left[\alpha(S)\right]=\sum_{S\in\SS}\left[\alpha(S)\right]=\sum_{S''\in\SS''}\sum_{\substack{A\subset \Omega\backslash S\\|A|=N_r}}\left[\alpha(S''+r\cdot A)\right]\;.$$
By counting how many times each choice of $A$ appears in the sum below, and then rescaling accordingly, we get
$$=\left(\frac{(nm-N_1-\dots-N_r)!}{(nm-N_1-\dots-N_{r-1}-rN_r)!}\right)^{-1}r^{-1}\sum_{S''\in\SS''}\sum_{\substack{A_1,\dots,A_r\subset \Omega\backslash S''\\|A_j|=N_r\\A_j\text{'s \ disjoint}}}\sum_j \left[\alpha(S''+r\cdot A_j)\right]$$
$$\geq \left(\frac{(nm-N_1-\dots-N_r)!}{(nm-N_1-\dots-N_{r-1}-rN_r)!}\right)^{-1}\frac{a(r)}{r}\sum_{S''\in\SS''}\sum_{\substack{A_1,\dots,A_r\subset \Omega\backslash S''\\|A_j|=N_r\\A_j\text{'s \ disjoint}}}\alpha(S''+A_1+\dots+ A_r)\;.$$

To summarize so far, we have
$$|\SS|E_{S\sim Unif(\SS)}\left[\alpha(S)\right] \geq \left(\frac{(nm-N_1-\dots-N_r)!}{(nm-N_1-\dots-N_{r-1}-rN_r)!}\right)^{-1}\cdot\frac{a(r)}{r}|\SS'|E_{S'\sim Unif(\SS')}\left[\alpha(S')\right] \;.$$

Next, we see that (since sample sets are treated as ordered)
$$|\SS|=\frac{(nm)!}{(nm-N_1-\dots-N_r)!}, \ |\SS'|=\frac{(nm)!}{(nm-N_1-\dots-N_{r-1}-rN_r)!}$$

Therefore,
$$E_{S\sim Unif(\SS)}\left[\alpha(S)\right] \geq \frac{a(r)}{r}\cdot E_{S'\sim Unif(\SS')}\left[\alpha(S')\right] \;.$$

By inducting over $r$, we then see that
$$E_{S\sim Unif(\SS)}\left[\alpha(S)\right] \geq \frac{\prod_{i=1}^r a(i)}{r!}\cdot E_{S\sim\mathcal{D}^s_{w/o}}\left[\alpha(S)\right] \;,$$
where $\SS=\{S:\mathbf{N}(S)=\mathbf{N}\}$ for any multiplicity vector $\mathbf{N}=(N_1,\dots,N_r,0,\dots,0)$. Therefore,
$$E_{S\sim \mathcal{D}^s_r}\left[\alpha(S)\right] \geq \frac{\prod_{i=1}^r a(i)}{r!}\cdot E_{S\sim\mathcal{D}^s_{w/o}}\left[\alpha(S)\right] \;,$$

Finally, we observe that if $\alpha(S)=\sup_{h\in\H}L(h)-\hat{L}_S(h)$, then $\alpha(S)$ satisfies~(\ref{eq:w_or_wo:alpha}) with $a(r)=r$, while if $\alpha(S)=\mathbb{I}\left\{\sup_{h\in\H}g(L(h))-\hat{L}_S(h)\geq c\right\}$, then $\alpha(S)$ satisfies~(\ref{eq:w_or_wo:alpha}) with $a(r)=1$. This concludes the proof.

\end{proof}

\begin{proof}{\it (Lemma~\ref{lem:w_or_wo:ind}.)}

Suppose  $s\leq \tfrac{K+1}{e} (nm)^{1-\tfrac{1}{K+1}}$. Then, as in the proof of Lemma~\ref{lem:w_or_wo:prob}, 
\begin{align*}P\left(\text{any entry is sampled more than }K\text{ times}\right)&\leq {s\choose K}\cdot\frac{1}{(nm)^{K-1}}\\
&\leq \frac{s^{K+1}}{(K+1)!(nm)^{K}}\\
&\leq \frac{(K+1)/e)^{K+1}(nm)^{K}}{(K+1)!(nm)^{K}}\\
&\leq\frac{1}{2},\text{ by Stirling's approximation.}\end{align*}

We show below that, for any $c$,
$$P_{S\sim \mathcal{D}^s_{w/o}}\left(\sup_{X\in\X}g(L(X))-\hat{L}_S(X)\geq c\right)\leq 2K\cdot P_{S\sim \mathcal{D}^s_K}\left(\sup_{X\in\X}g(L(X))-\hat{L}_S(X)\geq (2K)^{-1}c\right)\;,$$
where $\mathcal{D}^s_{w/o}$ and $\mathcal{D}^s_K$ are defined as in the proof of Lemmas~\ref{lem:w_or_wo:expectation} and~\ref{lem:w_or_wo:prob}, except with the independent noise model. As in the proof of Lemmas~\ref{lem:w_or_wo:expectation} and~\ref{lem:w_or_wo:prob}, this implies that
$$P_{S\sim \mathcal{D}^s_{w/o}}\left(\sup_{X\in\X}g(L(X))-\hat{L}_S(X)\geq c\right)\leq 4K\cdot P_{S\sim \D^s}\left(\sup_{X\in\X}g(L(X))-\hat{L}_S(X)\geq (2K)^{-1}c\right)\;.$$

We now prove that, for any $c$,
$$P_{S\sim \mathcal{D}^s_{w/o}}\left(\sup_{X\in\X}g(L(X))-\hat{L}_S(X)\geq c\right)\leq 2K\cdot P_{S\sim \mathcal{D}^s_K}\left(\sup_{X\in\X}g(L(X))-\hat{L}_S(X)\geq (2K)^{-1}c\right)\;.$$

Write $\Omega=[n]\times[m]$. Consider all samples from $\Omega$, drawn with replacement. When a particular $(i,j)$ is drawn multiple times, then the observed values at that entry of the matrix follow the independent noise model as described in the statement of Theorem~\ref{L2bound_inderror}.

For a sample set $S$ of size $s$, for $i=1,\dots,s$, define $\N(S)$ as in the proof of Lemma~\ref{lem:w_or_wo:extra}. Let $\N$ be any multiplicity vector, of the form $(N_1,\dots,N_r,0,\dots,0)$ for some $r\leq s$. Let $\mathbf{M}$ be a multiplicity vector defined from $\N$ as follows:
$$\mathbf{M}=(M_i)_i,\text{ where }M_i=N_{2i-1}+2N_i+N_{i+1}\;.$$

Now take any $A_1,A_2,\dots,A_{2r},B_2,\dots,B_{2r}\subset[n]\times[m]$, all disjoint, with $|A_i|=|B_i|=N_i$ for all $i$. Define $B_1=A_1$, and
$$S_A=\sum_{i=1}^{2r}\left(\sum_{j=1}^i A_i^{(j)}\right)\;, \ S_B=\sum_{i=1}^{2r}\left(\sum_{j=1}^i B_i^{(j)}\right)\;.$$
Note that $\N(S_A)=\N(S_B)=\N$. Now define
$$T_1=\sum_{i=1}^{2r}\left(\sum_{j=1}^{\lfloor \tfrac{i}{2}\rfloor}A_i^{(j)}+\sum_{j=\lfloor\tfrac{i}{2}\rfloor+1}^i B_i^{(j)}\right)\;, \  T_2=\sum_{i=1}^{2r}\left(\sum_{j=1}^{\lfloor \tfrac{i}{2}\rfloor}B_i^{(j)}+\sum_{j=\lfloor\tfrac{i}{2}\rfloor+1}^i A_i^{(j)}\right)\;.$$
Note that $\N(T_1)=\N(T_2)=\mathbf{M}$, and that up to reordering, $S_A+S_B=T_1+T_2$. We treat $T_1$ and $T_2$ as functions of $(S_A,S_B)$.

Write $\alpha_c(S)=\mathbb{I}\left\{\sup_{X\in\X}g(L(X))-\hat{L}_S(X)\geq c\right\}$. Then $\alpha$ satisfies the following whenever $|S_1|=|S_2|$:
$$\frac{1}{2}\left(\alpha_{2c}(S_1)+\alpha_{2c}(S_2)\right)\leq\alpha_c(S_1+S_2)\leq\alpha_c(S_1)+\alpha_c(S_2)\;.$$
Therefore,
\begin{align*}
2E_{S\sim Unif(\N)}\left(\alpha_c(S)\right)
&=(\#(S_A,S_B)\text{ pairs as above})^{-1}\sum_{(S_A,S_B)\text{ as above}}\alpha_c(S_A)+\alpha_c(S_B)\\
&\geq (\#(S_A,S_B)\text{ pairs as above})^{-1}\sum_{(S_A,S_B)\text{ as above}}\alpha_c(S_A+S_B)\\
&=(\#(S_A,S_B)\text{ pairs as above})^{-1}\sum_{(S_A,S_B)\text{ as above}}\alpha_c(T_1+T_2)\\
&\geq (\#(S_A,S_B)\text{ pairs as above})^{-1}\sum_{(S_A,S_B)\text{ as above}}\frac{1}{2}\left(\alpha_{2c}(T_1)+\alpha_{2c}(T_2)\right)\\
&= (\#(S_A,S_B)\text{ pairs as above})^{-1}\sum_{(S_A,S_B)\text{ as above}}\alpha_{2c}(T_1)\\
&= (\#(S_A,S_B)\text{ pairs as above})^{-1}\sum_{T:\N(T)=\mathbf{M}}\alpha_{2c}(T)\cdot(\#(S_A,S_B)\text{ pairs such that }T=T_1)\\
\end{align*}

We also have the following (note that here we treat samples as unordered, unlike in the proofs of Lemmas~\ref{lem:w_or_wo:expectation} and~\ref{lem:w_or_wo:prob}):
$$(\#(S_A,S_B)\text{ pairs as above})={nm\choose N_1,N_2,N_2,N_3,N_3,\dots,N_{2r},N_{2r}}\;,$$
and for any $T$ with $\N(T)=\mathbf{M}$,
$$(\#(S_A,S_B)\text{ pairs such that }T=T_1)=\prod_{i=1}^r{M_i\choose N_{2i-1},N_{2i},N_{2i},N_{2i+1}}\;.$$
Finally,
$$(\# T:\N(T)=\mathbf{M}(T))={nm\choose M_1,M_2,\dots,M_r}\;,$$
and therefore, continuing from above,
\begin{align*}
2E_{S\sim Unif(\N)}\left(\alpha_c(S)\right)
&=  (\#(S_A,S_B)\text{ pairs as above})^{-1}\sum_{T:\N(T)=\mathbf{M}}\alpha_{2c}(T)\cdot(\#(S_A,S_B)\text{ pairs such that }T=T_1)\\
&=(\#T:\N(T)=\mathbf{M})^{-1}\sum_{T:\N(T)=\mathbf{M}}\alpha_{2c}(T)\\
&=E_{T\sim Unif(\mathbf{M})}(\alpha_{2c}(T))\;.
\end{align*}

Inducting over $r$, we see that for any $\mathbf{N}=(N_1,\dots,N_r,0,\dots,0$,
$$2^{K(r)}E_{S\sim Unif(\N)}(\alpha_c(S))\geq E_{S\sim \mathcal{D}^s_{w/o}}(\alpha_{2^{K(r)}}(S))\;,$$
where $K(r)$ is the number of times that the operation $x\mapsto\lceil x/2\rceil$ must be applied iteratively to $r$ to obtain $1$; note that $2^{K(r)}\leq 2r$. Therefore,
$$P_{S\sim\mathcal{D}^s_{w/o}}\left(\sup_{X\in\mathcal{X}}g(L(x))-\hat{L}_S(X)\geq c\right)\leq 2r P_{S\sim\mathcal{D}^s_r}\left(\sup_{X\in\mathcal{X}}g(L(x))-\hat{L}_S(X)\geq (2r)^{-1}c\right)\;.$$

\end{proof}

\section{The Rademacher Complexity of the Trace-Norm Ball}\label{app:trnormrad}

\citet{ShraibmanSrebro} established that for a sample
$S=\{(i_1,j_1),\ldots,(i_s,j_s)\}$ of $s$ index-pairs, the empirical
Rademacher complexity of the trace-norm ball, viewed a predictor of
entries, is given by:
\begin{multline}
  \empRad_s\left(\left\{(i,j)\mapsto X_{ij} \,\middle|\, X \in
      \R^{n \times m}, \trnorm{X}\leq A \right\}\right) 
=  \Ep{\xi}{\sup_{\trnorm{X}\leq A} \frac{1}{s}\sum_{t=1}^s \xi_t
    X_{(i_t,j_t)}}
\\ = \frac{A}{s} \Ep{\xi}{\Norm{\sum_{t=1}^s \xi_t e_{i_t,j_t}}_2}\;,\label{eq:radcite}
\end{multline}
where the expectations is over independent uniformly distributed
random variables $\xi_1,\ldots,\xi_t \in \pm 1$, $\Norm{X}_2$ is the
spectral norm (maximal singular value) of $X$, and $e_{i,j}=e_i e_j^T$
is a matrix with a single $1$ at location $(i,j)$ and zeros elsewhere.
Analyzing the Rademacher complexity then amounts to analyzing the
expected spectral norm of the random matrix $Q=\sum_{t=1}^s \xi_t
e_{i_t,j_t}$.  

The worst-case Rademacher complexity, i.e.~the supermum of
\eqref{eq:radcite} over all samples $S$, is $\frac{1}{\sqrt{s}}$, and
does not lead to meaningful generalization results.  Indeed, if we
could meaningfully bound the worst-case Rademacher complexity, we
could guarantee learning under arbitrary sampling distributions over
index-pairs, but this is not the case---we know that trace-norm
regularization can fail when entries are not sampled uniformly
\citep{RusAndNatiNIPS}.

Instead, we focus on bounding the {\em expected} Rademacher
complexity, i.e.~the expectation of \eqref{eq:radcite} when entries in
$S$ are chosen independently from a {\em uniform} distribution over
index pairs.  \citet{ShraibmanSrebro} bounded the expected Rademacher
complexity by
$\mathbf{O}\left(\frac{A}{\sqrt{nm}}\sqrt{\frac{(n+m)\log^{3/2}n}{s}}\right)$ using a
bound of \citet{Seginer} on the spectral norm of a matrix with fixed
magnitudes and random signs, combined with arguments bounding the
number of observations in each row and column.  Here we present a much
simpler analysis, reducing the logarithmic factor from $\log^{3/2}(n)$
to $\log(n)$, using a recent result of \citet{TropTailBounds}.

We now proceed to bounding $\EE{\Norm{Q}_2}$, where the expectation is
over the sample $S$ and the random signs $\xi_t$. Denote $P_t=\xi_t
e_{i_t,j_t}$, we have $Q=\sum_t P_t$ and $P_t$ are i.i.d.~zero-mean
random matrices (recall that now both $\xi_t$ and $(i_t,j_t)$ are
random).  Theorem 6.1 of \citet{TropTailBounds}, combined with Remarks
6.3 and 6.5, allows us to bound the expected spectral norm of such a
sum of independent random matrices by:
\begin{equation}
  \label{eq:citeTropp}
  \EE{\Norm{Q}} = \mathbf{O}\left( \sigma \sqrt{\log(n+m)} + R \log(n+m) \right)\;,
\end{equation}
where $\Norm{P_t}_2\leq R$ (almost surely) and $$\sigma^2 =
\max\left(\,\Norm{\sum\EE{P_t^T P_t}}_2\, ,\,\Norm{\sum\EE{P_t P^T_t}}_2\,\right).$$  For
each $t$, $P_t$ is just a matrix with a single $+1$ or $-1$, hence
$\Norm{P_t}\leq 1$.  The matrix $P_t P^T_t \in \R^{n \times n}$ is
equal to $e_{i,i}$ with probability $\frac{1}{n}$, hence $\EE{P_t
  P_t^T}=\frac{1}{n}I_n$ and $\Norm{\sum\EE{P_t
    P^T_t}}_2=\Norm{\frac{s}{n}I_n}=\frac{s}{n}$.  Symmetrically,
$\Norm{\sum\EE{P^T_t P_t}}_2=\frac{s}{m}$ and so $\sigma^2
=\frac{s}{nm}\max(n,m)$.  Plugging $\sigma$ and $T$ into
\eqref{eq:citeTropp} we have:
\begin{equation}\label{eq:finalnormbound}
  \EE{\Norm{Q}_2} =  O\left( \sqrt{\frac{s(n+m)\log(n+m)}{nm}} + \log(n+m) \right)
  = O \left( \sqrt{\frac{s(n+m)\log(n+m)}{nm}} \right)
\end{equation}
where in the second inequality we assume $s\geq m$.  Plugging
\eqref{eq:finalnormbound} into \eqref{eq:radcite} we get:
\begin{equation}
  \label{eq:finalradbound}
  \EE{\empRad_s\left(\left\{(i,j)\rightarrow X_{ij} \,\middle|\, X \in
      \R^{n \times m}, \trnorm{X}\leq A \right\}\right)} 
= O \left( \frac{A}{\sqrt{nm}} \sqrt{\frac{(n+m)\log(n+m)}{s}} \right)
\end{equation}

\section{Using Rademacher complexity to bound error}\label{app:BM_radbound}

Let $\X$ be any class of matrices. We first discuss the $\ell_1$-loss case, in which we would like 
to bound $\frac{1}{nm}\left|Y-\hat{X}(S)\right|_1$, in expectation over the sample $S$ of size $s$ drawn
uniformly at random (with replacement) from the matrix. Regarding this as a prediction problem, this is equivalent to bounding $E_S\left(L(\hat{X}(S))\right)$.
We reformulate a result from~\citet{BartlettMendelson} in the following lemma.

\begin{lemma}\label{lem:BM_radbound} Let $M$ be any matrix in $\X$. Excess reconstruction error can be bounded in expectation as
$$E_S \left[L(\hat{X}(S))\right]\leq L(M)+2\RR_s(\X)\;,$$
where in this case $\RR_s(\X)$ denotes the expected Rademacher complexity over a sample of size $s$.
\end{lemma}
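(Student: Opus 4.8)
The plan is to invoke the standard symmetrization-plus-contraction argument for Rademacher generalization bounds; indeed the statement is essentially a restatement of results of \citet{BartlettMendelson}, specialized to matrix reconstruction under the absolute-error loss. Write $\hat{L}_S(X)=\frac{1}{s}\sum_{(i,j)\in S}\abs{Y_{ij}-X_{ij}}$ for the normalized empirical loss, so that $\hat{X}(S)$ minimizes $\hat{L}_S$ over $\X$ and $\EE{\hat{L}_S(X)}=L(X)$ for each fixed $X$, since the $s$ index pairs are drawn uniformly. The first step is the bound, valid for every sample $S$ and every fixed $M\in\X$,
\[
L(\hat{X}(S)) \;\le\; \hat{L}_S(\hat{X}(S)) + \sup_{X\in\X}\bigl(L(X)-\hat{L}_S(X)\bigr) \;\le\; \hat{L}_S(M) + \sup_{X\in\X}\bigl(L(X)-\hat{L}_S(X)\bigr),
\]
where the last inequality uses optimality of $\hat{X}(S)$. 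Taking expectations over $S$ and using $\EE{\hat{L}_S(M)}=L(M)$ reduces the claim to showing $\EE{\sup_{X\in\X}\bigl(L(X)-\hat{L}_S(X)\bigr)}\le 2\RR_s(\X)$.

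For this I would use the two classical ingredients. Symmetrization (a ghost sample together with Rademacher signs) gives $\EE{\sup_{X\in\X}\bigl(L(X)-\hat{L}_S(X)\bigr)}\le 2\,\EE{\empRad_s(\loss\circ\X)}$, where $\loss\circ\X$ denotes the loss class of functions $(i,j)\mapsto\abs{Y_{ij}-X_{ij}}$, $X\in\X$. Then, since for each entry $(i,j)$ the map $t\mapsto\abs{Y_{ij}-t}$ is $1$-Lipschitz, the Ledoux--Talagrand contraction inequality gives $\empRad_s(\loss\circ\X)\le\empRad_s(\X)$ for every sample, hence $\EE{\empRad_s(\loss\circ\X)}\le\RR_s(\X)$. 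Combining these with the first step yields $\EE{L(\hat{X}(S))}\le L(M)+2\RR_s(\X)$, as claimed.

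Since every step is textbook, there is no real obstacle; the only point needing a word of care is that the per-entry loss $t\mapsto\abs{Y_{ij}-t}$ does not vanish at $t=0$, which is the normalization assumed in the cleanest form of the contraction lemma. This is innocuous: replace it by the still $1$-Lipschitz function $t\mapsto\abs{Y_{ij}-t}-\abs{Y_{ij}}$, note that the subtracted constants contribute only a term $\tfrac{1}{s}\sum_t\xi_t\abs{Y_{i_tj_t}}$ to the Rademacher average, whose expectation over the signs vanishes, and apply contraction to the centered class; alternatively, one simply quotes the Lipschitz-loss generalization bound of \citet{BartlettMendelson} directly, which already absorbs this. The lemma is included only to pin down the precise form of the bound used in Section~\ref{proof:withrepl}.
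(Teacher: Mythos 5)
Your proof is correct and follows essentially the same route as the paper: the same chain of inequalities using optimality of $\hat{X}(S)$, unbiasedness of $\hat{L}_S(M)$, and the bound $\EE{\sup_{X\in\X}(L(X)-\hat{L}_S(X))}\leq 2\RR_s(\X)$. The only cosmetic difference is that you unpack the last bound via symmetrization and Ledoux--Talagrand contraction, whereas the paper cites the corresponding results of \citet{BartlettMendelson} directly, which amount to the same argument.
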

\begin{proof}
Combining Theorems 8 and 12(4) from~\citet{BartlettMendelson} (using the last part of the proof of Theorem 8 rather than the main statement), 
since $\ell_1$-error is a 1-Lipschitz function,
$$E_S \left[\sup_{X\in\X} (L(X)-\hat{L}(X))\right]\leq2 \RR_s(\X)\;.$$
In particular, this implies that
$$E_S \left[L(\hat{X}(S))-\hat{L}(\hat{X}(S))\right]\leq2 \RR_s(\X)\;.$$
Furthermore, for any sample $S$, by definition we know $\hat{L}(\hat{X}(S))\leq \hat{L}(M)$, therefore
$$E_S \left[L(\hat{X}(S))-\hat{L}(M)\right]\leq E_S \left[L(\hat{X}(S))-\hat{L}(\hat{X}(S))\right]\leq 2\RR_s(\X)\;.$$
Finally, note that since $M$ has a fixed value, and does not depend on $S$, the empirical reconstruction error $\hat{L}(M)$ is an unbiased estimator of $L(M)$, that is,
$$E_S\left[\hat{L}(M)\right]=L(M)=E_S\left[L(M)\right]\;.$$
Therefore,
$$E_S \left[L(\hat{X}(S))\right]\leq L(M)+2 \RR_s(\H)\;.$$
\end{proof}

In the $\ell_2$ case, where $L(X)=\frac{1}{nm}|Y-X|^2_2$, \citet{SST} derive a bound, which holds in high probability over $S$, in the following form (which we write with the notation of matrices, but is derived generally for any prediction problem):
$$\sup_{X\in\X} L(X)-\hat{L}(X)-\sqrt{A_1\cdot\hat{L}(X)}\leq A_2\;,$$
for some $A_1,A_2$ which depend on the Rademacher complexity of $\X$ (and on $s$ and other parameters). From this, using arguments similar to those used in Lemma~\ref{lem:BM_radbound} above, they derive the bound on $L(\hat{X}(S))$ that is shown above in~(\ref{eq:smoothbound}).

\end{document}